\newtheorem{theorem}{Theorem}
\newtheorem{definition}{Definition}
\newtheorem{lemma}{Lemma}
\newtheorem{property}{Property}
\newtheorem{proposition}{Proposition}
\newcommand{\eat}[1]{}
\newcommand{\etal}{{\em et al.~}}       % et al.
\newcommand{\ie}{{\em i.e.,~}}           % i.e.
\newcommand{\aka}{{a.k.a.~}}       %a.k.a.
\newcommand{\wrt}{{\em wrt.,~}}         % wrt.,
\def\circlerightarrow{\put(2,2.5){\circle{2.5}}\rightarrow}
\def\astrightarrow{\put(0.2,-2.2){*}\rightarrow}
\def\astleftarrow{\leftarrow\put(-5,-2.2){*}}
\begin{document}

\title{Discovering Ancestral Instrumental Variables for Causal Inference from Observational Data}

\author{Debo~Cheng~and~Jiuyong~Li~and~Lin~Liu~and~Kui~Yu~and~Thuc~Duy~Le~and~Jixue~Liu%~\IEEEmembership{Life~Fellow,~IEEE}% <-this % stops a space %\IEEEcompsocthanksitem
	\IEEEcompsocitemizethanks{\IEEEcompsocthanksitem D. Cheng, J. Li, L. Liu, T.D. Le and J. Liu are with STEM, University of South Australia, Mawson Lakes, South Australia, 5095, Australia (e-mail: \{Debo.Cheng,Jiuyong.Li,Lin.Liu,Thuc.Le,Jixue.Liu\}@unisa.edu.au). 
	K. Yu is with the School of Computer Science and Information Engineering, Hefei University of Technology, Hefei, 230000, China (e-mail: ykui713@gmail.com).
	(Corresponding authors: Debo Cheng and Jiuyong Li)}}

% The paper headers
\markboth{Journal of \LaTeX\ Class Files,~Vol.~*, No.~*, August~202*}%
{Shell \MakeLowercase{\textit{et al.}}: A Sample Article Using IEEEtran.cls for IEEE Journals}

%\IEEEpubid{0000--0000/00\$00.00~\copyright~202* IEEE}

% Remember, if you use this you must call \IEEEpubidadjcol in the second
% column for its text to clear the IEEEpubid mark.

\maketitle

\begin{abstract}
	Instrumental variable (IV) is a powerful approach to inferring the causal effect of a treatment on an outcome of interest from observational data even when there exist latent confounders between the treatment and the outcome. However, existing IV methods require that an IV is selected and justified with domain knowledge. An invalid IV may lead to biased estimates. Hence, discovering a valid IV is critical to the applications of IV methods. In this paper, we study and design a data-driven algorithm to discover valid IVs from data under mild assumptions. We develop the theory based on partial ancestral graphs (PAGs) to support the search for a set of candidate Ancestral IVs (AIVs), and for each possible AIV, the identification of its conditioning set. Based on the theory, we propose a data-driven  algorithm to discover a pair of   IVs from data. The experiments on synthetic and real-world datasets show that the developed IV discovery algorithm estimates accurate estimates of causal effects in comparison with the state-of-the-art IV based causal effect estimators.
\end{abstract}

\begin{IEEEkeywords}
Causal Inference, Observational Studies, Latent Confounders, Confounding Bias, Maximal Ancestral Graph.
\end{IEEEkeywords}

\section{Introduction}
\label{Sec:Intro}
\IEEEPARstart{A} fundamental challenge for inferring from observational data the causal effect of a treatment $W$ (a.k.a. exposure, intervention or action) on an outcome $Y$ of interest is the presence of latent (\aka unobserved or unmeasured) confounders, variables which affect $W$ and $Y$ simultaneously. Instrumental variables (IVs) are a powerful tool to address this challenge, primarily used by statisticians, economists and social scientist~\cite{reiersol1945confluence,imbens2015causal,blalock2017causal}. It is possible to eliminate the confounding bias by leveraging a valid IV~\cite{angrist1996identification}.

The standard IV approach requires a predefined IV (denoted as $S$) that meets the following three conditions: (1) $S$ is a cause of $W$, (2) There is no confounding bias for the effect of $S$ on $Y$ (\aka exogeneity), and (3) the effect of $S$ on $Y$ is entirely mediated through $W$ (a.k.a. the exclusion restriction)~\cite{martens2006instrumental}. For example, $S$ in the causal graph in Fig.~\ref{fig:twocausaldiagrams} (a) is a standard IV as it meets all the three conditions. The second and third conditions have to be justified by domain knowledge~\cite{pearl2009causality} and hence IV based methods are mostly classified as empirical methods in literature~\cite{kuroki2005instrumental,silva2017learning}. 

\begin{figure}[t]
	\begin{center}
		\includegraphics[scale=0.55]{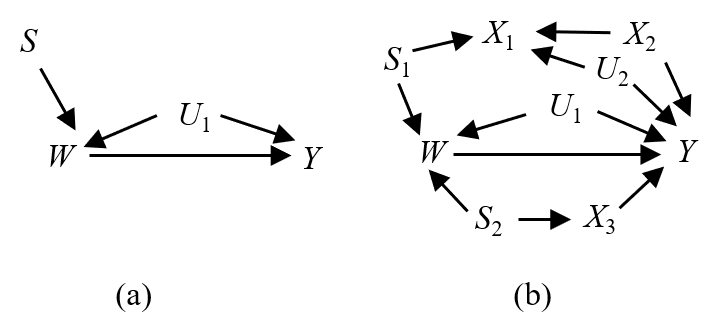}
		\caption{(a) $S$ is a standard IV for $W\rightarrow Y$. (b) $S_1$ and $S_2$ are valid IVs \wrt $W\rightarrow Y$, conditioning on $\mathbf{Z}=\{X_2, X_3\}$.}
		\label{fig:twocausaldiagrams}
	\end{center}
\end{figure}

Data-driven approach to identifying standard IVs is impractical. Instrumental inequality~\cite{pearl1995testability,kuroki2005instrumental} has been proposed to test an IV in data, but it is a necessary condition, not a sufficient condition. Under a set of strong assumptions on the data distribution, a semi-instrumental variable that can be tested in data has been introduced by~\cite{chu2001semi}. However, there is not a practical algorithm rooted from the above concepts. By assuming that at least a half of the covariates are valid IVs, Kang \etal~\cite{kang2016instrumental} proposed an algorithm, sisVIVE (some invalid, some valid IV estimator), to estimate the causal effect in data. Hartford \etal~\cite{hartford2021valid} extended the sisVIVE algorithm by employing a deep learning based IV estimator. The main challenge of the two algorithms is that their strong assumption is often unsatisfied in many real-world applications. 

To relax the last two conditions (\ie the exogeneity and exclusion restriction) of a standard IV, a graphical criterion~\cite{pearl2009causality,brito2002generalized} is proposed to identify an observed variable $S$ as an IV (\ie the conditional IV (CIV)), conditioning on a set of observed variables $\mathbf{Z}$ from a given DAG (directed acyclic graph, which represents the causal relations of all measured and unmeasured variables). CIV allows a confounding bias between $S$ and $Y$, and $S$ have multiple causal paths to $Y$. The bias can be adjusted by a conditioning set. Van der Zander \etal~\cite{van2015efficiently} have revised CIV to Ancestral IV (AIV) to avoid the situation where void CIVs may be identified based on the original definition (see Section~\ref{subsec:IVmethods} for details). AIV identification needs a DAG too.

IV.tetrad~\cite{silva2017learning} is the only existing data-driven CIV method. IV.tetrad based on CIV and it requires two valid CIVs in the covariate set. The tetrad condition is used to discover the pair of   CIVs in data and it assumes that the conditioning sets of the pair of CIVs are the same and equal to the set of remaining covariates (i.e. the original covariate set excluding the pair of CIVs). This tetrad condition leads to a wrong identification when the set of  remaining covariates contains a collider. For example, $S_1$ and $S_2$ in Fig.~\ref{fig:twocausaldiagrams} (b) are a pair of   CIVs. The conditioning sets for $S_1$ and $S_2$ are the same, both equal to $\{X_2, X_3\}$, but not as assumed in the IV.tetrad, i.e. $\{X_1, X_2, X_3\}$. When $X_1$ is used in the conditioning set, path $ S_1 \to X_1 \leftarrow U_2 \to Y$ is opened and $S_1$ does not instrumentalise $W$ conditioning on $\{X_1, X_2, X_3\}$ any more. Hence, the tetrad condition does not find the right CIV pairs and leads to a biased causal effect estimation.    

This work improves IV.tetrad in the following ways, which are also our contributions. 
\begin{enumerate}
	\item We generalise the tetrad condition so that each AIV in the pair conditions on its own conditioning set, and this rectifies the current tetrad condition which fails to find the right pair when the covariate set contains a collider.   
	\item We develop the theory for identifying the set of candidate AIVs in a reduced space for efficient search for a pair of AIVs.
	\item We propose a data-driven algorithm for estimating causal effects from data with latent variables based on the above developed theorems. Extensive experiments on synthetic and real-world datasets have shown the effectiveness of the algorithm.  
\end{enumerate}

\section{Background}
\label{sec:pre}
\subsection{Graph Terminology}
\label{subsec:graphicalnotations}
A graph $\mathcal{G}\!\!=\!\!(\mathbf{V}, \mathbf{E})$ is composed of a set of nodes $\mathbf{V}\!\!=\!\!\{V_{1}, \dots, V_{p}\}$, representing random variables, and a set of edges $\mathbf{E} \!\subseteq \mathbf{V} \times \mathbf{V}$, representing the relations between nodes. In this paper, we assume that in a graph  $\mathcal{G}$, there is at most one edge between any two nodes.

Two nodes are adjacent if there exists an edge between them. For an edge $V_i\rightarrow V_j$, $V_i$ and $V_j$ are its head and tail respectively and $V_i$ is known as a parent of $V_j$ (and $V_j$ is a child of $V_i$). We use $Adj(V)$, $Pa(V)$ and $Ch(V)$ to denote the sets of all adjacent nodes, parents and children of $V$, respectively. A path $\pi$ is a sequence of nodes $\left \langle V_1, \dots, V_n\right \rangle$ such that for $1\leq i \leq n-1$, the pair $(V_i, V_{i+1})$ is adjacent. A path $\pi$ from $V_i$ to $V_j$ is a directed or causal path if all edges along it are directed towards $V_j$, and $(V_i, V_j)$ are called endpoint nodes, other nodes are non-endpoint nodes. If there is a directed path $\pi$ from $V_i$ to $V_j$, $V_i$ is a known as an ancestor of $V_j$ and $V_j$ is a descendant of $V_i$. The sets of ancestors and descendants of a node $V$ are denoted as $An(V)$ and $De(V)$, respectively. 

A DAG (directed acyclic graph) is a direct graph (i.e. a graph containing only directed edges $\rightarrow$) without directed cycles (i.e. a directed path whose two endpoints are the same node). A DAG is often used to represent the data generation mechanism or causal mechanism underlying the data, with all variables, both observed and unobserved (if any) included in the graph.

Ancestral graphs are used to represent the data generating generation mechanisms that may involve latent variables, with only observed variables included in the graphs~\cite{richardson2002ancestral,zhang2008causal}. An ancestral graph may contain three types of edges, $\rightarrow$, $\leftrightarrow$ (it is used to represent that a common cause of two observed variables is a latent variable.) and $\circlerightarrow$ (the circle tail denotes the orientation of the edge is uncertain.), and we use `$\astrightarrow$' to denote any of the three types. $C$ is a collider on the path $\pi$ if $\pi$ contains a subpath  $\astrightarrow C\astleftarrow$. In an ancestral graph, a path is a collider path if every non-endpoint node on it is a collider. A path of length one is a trivial collider path.

In a graph, an almost directed cycle occurs when $V_i \leftrightarrow V_j$ is in the graph and $V_j\in An(V_i)$. An ancestral graph is a graph that does not contain directed cycles or almost directed cycles~\cite{richardson2002ancestral}. In an ancestral graph, a path from $V_i$ to $V_j$ is a possibly directed or causal path if there is not an arrowhead pointing in the direction of $V_i$. In this case, $V_i$ is a possible ancestor of $V_j$ and $V_j$ is a possible descendant of $V_i$. The sets of possible ancestors and descendants of $V$ are denoted as $PossAn(V)$ and $PossDe(V)$, respectively.

In graphical causal modelling, the assumptions of Markov property and faithfulness are often involved to discuss the relationship between the causal graph and the data distribution. 

\begin{definition}[Markov property~\cite{pearl2009causality}]
	\label{Markov condition}
	Given a DAG $\mathcal{G}=(\mathbf{V}, \mathbf{E})$ and the joint probability distribution of $\mathbf{V}$ $(prob(\mathbf{V}))$, $\mathcal{G}$ satisfies the Markov property if for $\forall V_i \in \mathbf{V}$, $V_i$ is probabilistically independent of all of its non-descendants, given $Pa(V_i)$.
\end{definition}

\begin{definition}[Faithfulness~\cite{spirtes2000causation}]
	\label{Faithfulness}
	A DAG $\mathcal{G}=(\mathbf{V}, \mathbf{E})$ is faithful to a joint distribution $prob(\mathbf{V})$ over the set of variables $\mathbf{V}$ if and only if every independence present in $prob(\mathbf{V})$ is entailed by $\mathcal{G}$ and satisfies the Markov property. A joint distribution $prob(\mathbf{V})$ over the set of variables $\mathbf{V}$ is faithful to the DAG $\mathcal{G}$ if and only if the DAG $\mathcal{G}$ is faithful to the joint distribution $prob(\mathbf{V})$.
\end{definition}
A DAG $\mathcal{G}=(\mathbf{V}, \mathbf{E})$ satisfies both assumptions, the probability distribution $prob(\mathbf{V})$ can be factorised as: $prob(\mathbf{V}) = \prod_{i}^{p}prob(V_i |Pa(V_i))$. Thus, together Markov property and faithfulness establish a close relation between the causal graph and the data distribution.

\begin{definition}[Causal sufficiency~\cite{spirtes2000causation}]
	\label{Causalsufficiency}
	In a data, for every pair of observed variables $(V_i, V_j)$ in $\mathbf{V}$, all their common causes are also in $\mathbf{V}$.
\end{definition}

In a DAG, d-separation is a well-known graphical criterion that is used to read off the identification of conditional independence between variables entailed in the DAG when the Markov property, faithfulness and causal sufficiency are satisfied~\cite{pearl2009causality,spirtes2000causation}.

\begin{definition}[d-separation~\cite{pearl2009causality}]
	\label{d-separation}
	A path $\pi$ in a DAG $\mathcal{G}=(\mathbf{V}, \mathbf{E})$ is said to be d-separated (or blocked) by a set of nodes $\mathbf{Z}$ if and only if
	(i) $\pi$ contains a chain $V_i \rightarrow V_k \rightarrow V_j$ or a fork $V_i \leftarrow V_k \rightarrow V_j$ such that the middle node $V_k$ is in $\mathbf{Z}$, or
	(ii) $\pi$ contains a collider $V_k$ such that $V_k$ is not in $\mathbf{Z}$ and no descendant of $V_k$ is in $\mathbf{Z}$.
	A set $\mathbf{Z}$ is said to d-separate $V_i$ from $V_j$ ($ V_i \Vbar_{d} V_j\mid\mathbf{Z}$) if and only if $\mathbf{Z}$ blocks every path between $V_i$ to $V_j$. Otherwise they are said to be d-connected by $\mathbf{Z}$, denoted as $V_i\nVbar_{d} V_j\mid\mathbf{Z}$.
\end{definition}

\begin{property}
	Two observed variables $V_i$ and $V_j$ are d-separated given a conditioning set $\mathbf{Z}$ in a DAG if and only if $V_i$ and $V_j$ are conditionally independence given $\mathbf{Z}$ in data~\cite{spirtes2000causation}.  if $V_i$ and $V_j$ are d-connected, $V_i$ and $V_j$  are conditionally dependent. 
\end{property}

However, a system may involve the latent variables (the latent variable is an unmeasured common cause of two nodes) in most situations since there is not a close world. Ancestral graphs are proposed to represent the system that may involve latent variables~\cite{richardson2002ancestral,zhang2008causal}. In our work, we utilise the Maximal ancestral graph and introduce it as follows.

\begin{definition}[Maximal ancestral graph (MAG)~\cite{richardson2002ancestral}]
	\label{MAG}
	An ancestral graph $\mathcal{M}=(\mathbf{V}, \mathbf{E})$ is a MAG when every pair of non-adjacent nodes $V_{i}$ and $V_{j}$ in $\mathcal{M}$ is m-separated by a set $\mathbf{Z}\subseteq \mathbf{V}\backslash \{V_{i}, V_{j}\}$.
\end{definition}

It is worth noting that a DAG satisfies both conditions of a MAG, so a DAG is also a MAG without bi-directed edges~\cite{zhang2008causal}.
An important concept in a DAG is d-separation, which captures the conditional independence relationships between variables based on Markov property~\cite{pearl2009causality}. A natural extension of the d-separation to an ancestral graph is m-separation~\cite{richardson2002ancestral}.

\begin{definition}[m-separation~\cite{richardson2002ancestral}]
	\label{m-separation}
	In an ancestral graph $\mathcal{M}=(\mathbf{V}, \mathbf{E})$, a path $\pi$ between $V_{i}$ and $V_{j}$ is said to be m-separated by a set of nodes $\mathbf{Z}\subseteq \mathbf{V}\setminus\{V_i, V_j\}$ (possibly $\emptyset$) if $\pi$ contains a subpath $\langle V_l, V_k, V_s\rangle$ such that the middle node $V_k$ is a non-collider on $\pi$ and $V_k \in \mathbf{Z}$; or $\pi$ contains $V_l\astrightarrow V_k\astleftarrow V_s$ such that $V_k \notin \mathbf{Z}$ and no descendant of $V_k$ is in $\mathbf{Z}$. Two nodes $V_{i}$ and $V_{j}$ are said to be m-separated by $\mathbf{Z}$ in $\mathcal{M}$, denoted as $V_i \Vbar_m V_j|\mathbf{Z}$ if every path between $V_{i}$ and $V_{j}$ are m-separated by $\mathbf{Z}$; otherwise they are said to be m-connected by $\mathbf{Z}$, denoted as $V_i\nVbar_m V_j|\mathbf{Z}$.
\end{definition}
\noindent where $\Vbar_m$ denotes m-separation and $\nVbar_m$ denotes m-connecting. In a DAG, m-separation reduces to d-separation. The Markov property of ancestral graph is captured by m-separation. 

If two MAGs represent the same set of m-separations, they are called \emph{Markov equivalent}, and formally, we have the following definition.

\begin{definition}[Markov equivalent MAGs~\cite{zhang2008completeness}]
	\label{markovequ}
	Two MAGs $\mathcal{M}_1$ and $\mathcal{M}_2$ with the same nodes are said to be \emph{Markov equivalent}, denoted $\mathcal{M}_1 \sim \mathcal{M}_2$, if for all triple nodes $X$, $Y$, $Z$, $X$ and $Y$ are m-separated by $Z$ in $\mathcal{M}_1$ if and only if $X$ and $Y$ are m-separated by $Z$ in $\mathcal{M}_2$.
\end{definition}

A set of Markov equivalent MAGs can be encoded uniquely by a \emph{partial ancestral graph} (PAG)~\cite{zhang2008causal}.

\begin{definition}[PAG~\cite{zhang2008causal}]
	\label{def:pag}
	Let $[\mathcal{M}]$ be the Markov equivalence class of a MAG $\mathcal{M}$. The PAG $\mathcal{P}$ for $[\mathcal{M}]$ is a partial mixed graph such that (i). $\mathcal{P}$ has the same adjacent relations among nodes as $\mathcal{M}$ does; (ii). For an edge, its mark of arrowhead or mark of the tail is in $\mathcal{P}$ if and only if the same mark of arrowhead or the same mark of the tail is shared by all MAGs in $[\mathcal{M}]$.
\end{definition}

\begin{definition}[Visibility~\cite{zhang2008causal}]
	\label{Visibility}
	Given a MAG $\mathcal{M}=(\mathbf{V}, \mathbf{E})$, a directed edge $V_{i}\rightarrow V_{j}$ is visible if there is a node $V_{k}\notin Adj(V_{j})$, such that either there is an edge between $V_{k}$ and $V_{i}$ that is into $V_{i}$, or there is a collider path between $V_{k}$ and $V_{i}$ that is into $V_{i}$ and every node on the path is a parent of $V_{j}$. Otherwise, $V_{i}\rightarrow V_{j}$ is invisible.
\end{definition}

The visible edge is a critical concept in a MAG~\cite{zhang2008causal,perkovic2018complete}. A DAG over measured and unmeasured variables can be mapped to a MAG with measured variables. From a DAG over $\mathbf{X}\cup\mathbf{U}$ where $\mathbf{X}$ is a set of measured variables and $\mathbf{U}$ is a set of unmeasured variables, following the construction rule specified in~\cite{zhang2008completeness}, one can construct a MAG with nodes $\mathbf{X}$ such that all the conditional independence relationships among the measured variables entailed by the DAG are entailed by the MAG and vice versa, and the ancestral relationships in the DAG are maintained in the MAG. %The inducing path and construction rules are necessary to convert a DAG to a MAG, and they can be found in the works~\cite{richardson2002ancestral} and~\cite{zhang2008completeness}, respectively.

\subsection{Instrumental Variable (IV)}
\label{subsec:IVmethods}
Let $W$ be the treatment variable, $Y$ the outcome, and $\mathbf{X}$ be the set of all other variables. As in the literature, we consider that $\mathbf{X}$ contains pretreatment variables only, \ie for any $X\in\mathbf{X}$, $X\notin (De(W)\cup De(Y))$~\cite{abadie2003semiparametric,silva2017learning,athey2019generalized}.
When estimating the average causal effect of $W$ on $Y$, denoted as $\beta_{wy}$ from data, we follow the convention in causal inference literature, that is,  the data distribution is said to be compatible with the underlying causal DAG $\mathcal{G}$, \ie the assumptions of Markov property and faithfulness are satisfied.  

The goal of this work is to quantify the average causal effect of $W$ on $Y$, \ie $\beta_{wy}$, even when there exist unmeasured variables between $W$ and $Y$, based on observational data, by extending the existing IV techniques. 

In this section, we introduce the background information related to IVs.

\begin{definition}[Standard IV~\cite{bowden1990instrumental,hernan2006instruments}]
	\label{def:conIV}
	A variable $S$ is said to be an IV \wrt $W\rightarrow Y$, if (i) $S$ has a causal effect on $W$, (ii) $S$ affects $Y$ only through $W$ (\ie $S$ has no direct effect on $Y$), and (iii) $S$ does not share common causes with $Y$.
\end{definition}

The last two conditions of a standard IV are untestable and strict. In practice, $S$ may have other causal paths to $Y$, and $S$ is often confounded with $Y$ by other measured variables. The concept of conditional IV (CIV) in DAG is proposed to relax the conditions of a standard IV.
\begin{definition}[Conditional IV (CIV) in DAG~\cite{pearl2009causality,brito2002generalized}]
	\label{def:conditionalIV}
	Given a DAG $\mathcal{G}=(\mathbf{X}\cup \mathbf{U}\cup \{W, Y\}, \mathbf{E})$ where $\mathbf{X}$ and $\mathbf{U}$ are measured and unmeasured variables respectively. A variable $S\in \mathbf{X}$ is said to be a CIV \wrt $W\rightarrow Y$, if there exists a set of measured variables $\mathbf{Z}\subseteq\mathbf{X}$ such that (i) $S\nVbar_{d} W\mid\mathbf{Z}$, (ii) $S\Vbar_{d} Y\mid\mathbf{Z}$ in $\mathcal{G}_{\underline{W}}$, and (iii) $\forall Z\in \mathbf{Z}$, $Z\notin De(Y)$ where $\mathcal{G}_{\underline{W}}$ is obtained by removing $W\rightarrow Y$ from $\mathcal{G}$. 
\end{definition}

For a CIV $S$ as defined in Definition~\ref{def:conditionalIV}, $\mathbf{Z}$ is known to instrumentalise $S$ in the given DAG. However, a variable may be a CIV when it has zero causal effect on $W$, and this might result in a misleading conclusion. To mitigate this issue, Ancestral IV (AIV) in DAG is proposed.

\begin{definition}[Ancestral IV (AIV) in DAG~\cite{van2015efficiently}]
	\label{def:ancestralIV}
	Given a DAG $\mathcal{G}=(\mathbf{X}\cup \mathbf{U}\cup \{W, Y\}, \mathbf{E})$ where $\mathbf{X}$ and $\mathbf{U}$ are measured and unmeasured variables respectively. A variable $S\in \mathbf{X}$ is said to be an AIV \wrt $W\rightarrow Y$, if there exists a set of measured variables $\mathbf{Z}\subseteq\mathbf{X}\setminus \{S\}$ such that (i) $S\nVbar_{d} W\mid\mathbf{Z}$, (ii) $S\Vbar_{d} Y\mid\mathbf{Z}$ in $\mathcal{G}_{\underline{W}}$, and (iii) $\mathbf{Z}$ consists of $An(Y)$ or $An(S)$ or both and $\forall Z\in \mathbf{Z}$, $Z\notin De(Y)$.
\end{definition}

An AIV in DAG is a CIV in DAG, but a CIV may not be an AIV. AIV is a restricted version of CIV~\cite{van2015efficiently}. However, the applications of standard IV, CIV and AIV are established in a complete causal DAG $\mathcal{G}$, which greatly limits their capacity in real-world applications. 

Recently, Cheng \etal proposed the concept of AIV in MAG and the theorem for identifying a conditioning set $\mathbf{Z}$ that instrumentalises a given AIV in PAG~\cite{cheng2022ancestral}. However, the work by Cheng \etal assume an AIV in MAG has been given, and the focus is on finding the conditioning set for the AIV in MAG from data. Our work in this paper aims to find an AIV in MAG from data, which makes it possible for complete data-driven search for AIVs. 
\begin{proposition}[AIV in MAG~\cite{cheng2022ancestral}]
	\label{pro:pro001}
	Given a DAG $\mathcal{G} = (\mathbf{X}\cup\mathbf{U}\cup\{W, Y\}, \mathbf{E}')$ with the edges $W\rightarrow Y$ and $W\leftarrow U\rightarrow Y$ in $\mathbf{E}'$, and $U\in\mathbf{U}$, and the MAG $\mathcal{M} = (\mathbf{X}\cup\{W, Y\}, \mathbf{E})$ is mapped from $\mathcal{G}$ based on the construction rules~\cite{zhang2008completeness}. Then if $S$ is an AIV conditioning on a set of measured variables $\mathbf{Z}\subseteq\mathbf{X}\setminus\{S\}$ in $\mathcal{G}$, $S$ is an AIV conditioning on a set of measured variables $\mathbf{Z}\subseteq\mathbf{X}\setminus\{S\}$ in $\mathcal{M}$.
\end{proposition}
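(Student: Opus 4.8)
The plan is to reduce the claim to two structural invariants of the DAG-to-MAG construction of Zhang~\cite{zhang2008completeness}, both already quoted in Section~\ref{subsec:graphicalnotations}: (a) the construction preserves conditional independence among the measured variables, i.e. for disjoint sets of measured variables $\mathbf{A},\mathbf{B},\mathbf{C}$ one has $\mathbf{A}\Vbar_d\mathbf{B}\mid\mathbf{C}$ in the DAG if and only if $\mathbf{A}\Vbar_m\mathbf{B}\mid\mathbf{C}$ in the corresponding MAG; and (b) the construction preserves ancestral relations among the measured variables, so that for measured $V_i,V_j$ we have $V_i\in An(V_j)$ in $\mathcal{G}$ iff $V_i\in An(V_j)$ in $\mathcal{M}$. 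Since $S$, $W$, $Y$ and every element of $\mathbf{Z}$ are measured, each of the three defining conditions of an AIV (Definition~\ref{def:ancestralIV}) will be transferred from $\mathcal{G}$ to $\mathcal{M}$ by invoking (a) or (b).

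Conditions (i) and (iii) are immediate. For (i), $S\nVbar_d W\mid\mathbf{Z}$ in $\mathcal{G}$ is a statement about measured variables only, so by invariant (a) it is equivalent to $S\nVbar_m W\mid\mathbf{Z}$ in $\mathcal{M}$. For (iii), invariant (b) gives $An(Y)$, $An(S)$ and $De(Y)$ the same measured members in $\mathcal{G}$ and in $\mathcal{M}$; hence the requirements $\mathbf{Z}\subseteq An(Y)\cup An(S)$ and $\mathbf{Z}\cap De(Y)=\emptyset$ carry over verbatim (the ancestral relations being determinate in the specific MAG $\mathcal{M}$, and only relaxed to possible-ancestor relations once one passes to the PAG).

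The crux is condition (ii), because it is stated in the manipulated graph $\mathcal{G}_{\underline{W}}$ rather than in $\mathcal{G}$. The first step here is to establish that the manipulated MAG $\mathcal{M}_{\underline{W}}$ is exactly the MAG obtained by applying the construction to $\mathcal{G}_{\underline{W}}$; granting this, invariant (a) applied to the pair $(\mathcal{G}_{\underline{W}},\mathcal{M}_{\underline{W}})$ turns $S\Vbar_d Y\mid\mathbf{Z}$ in $\mathcal{G}_{\underline{W}}$ into $S\Vbar_m Y\mid\mathbf{Z}$ in $\mathcal{M}_{\underline{W}}$, which closes the proof. I expect this commuting statement to be the main obstacle, and it is precisely where the retained confounder $W\leftarrow U\rightarrow Y$ matters: deleting $W\rightarrow Y$ does not merely erase the $W$--$Y$ adjacency, since $U$ still induces a bidirected edge $W\leftrightarrow Y$, and, more delicately, it must also remove any directed edge into $Y$ that was present in $\mathcal{M}$ only as a ``shortcut'' for a path mediated through the now-deleted $W\rightarrow Y$. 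Naive edge deletion from $\mathcal{M}$ would keep such shortcuts and would not give the correct $\mathcal{M}_{\underline{W}}$.

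I expect the pretreatment assumption on $\mathbf{X}$ to do the decisive work in establishing this correspondence. Because every $X\in\mathbf{X}$ satisfies $X\notin De(W)$, the unique measured child of $W$ is $Y$, so $W\rightarrow Y$ is the only edge with a tail at $W$ in $\mathcal{M}$; after deleting it, every edge incident to $W$ in $\mathcal{M}_{\underline{W}}$ carries an arrowhead at $W$ and $W$ has no measured descendant. Consequently, on any candidate m-connecting $S$--$Y$ path given $\mathbf{Z}$, the node $W$ can occur only as a collider, and since $W\notin\mathbf{Z}$ (as $W\notin\mathbf{X}$) and no descendant of $W$ lies in $\mathbf{Z}$, such a collider is always blocked; hence every m-connecting $S$--$Y$ path must avoid $W$ and route through the confounding structure, matching exactly the set of active $S$--$Y$ paths in $\mathcal{G}_{\underline{W}}$. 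Making this matching precise — equivalently, verifying that the adjacencies and edge marks of $\mathcal{M}_{\underline{W}}$ agree with those of the MAG of $\mathcal{G}_{\underline{W}}$, with $W$ becoming a pure collider and the spurious mediated edges into $Y$ disappearing — is the technical heart of the argument, after which condition (ii), and therefore the whole proposition, follows.
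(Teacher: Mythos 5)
First, note that the paper does not actually prove Proposition~\ref{pro:pro001}: it is imported verbatim from~\cite{cheng2022ancestral} as background material, so there is no in-paper proof to compare yours against. Judged on its own terms, your write-up is a proof plan rather than a proof. Your handling of conditions (i) and (iii) via the two invariants of the DAG-to-MAG construction (preservation of conditional independences and of ancestral relations among measured variables) is sound, but the entire substance of the proposition sits in condition (ii), and there you explicitly defer the decisive step --- the claim that the manipulated MAG $\mathcal{M}_{\underline{W}}$ coincides (at least in its m-separations of $S$ from $Y$) with the MAG constructed from $\mathcal{G}_{\underline{W}}$ --- calling it ``the technical heart of the argument'' without supplying it.

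That deferred step is not merely unfinished; it is genuinely delicate, and your own remark about ``shortcut'' edges points at the problem. Those edges are real and unavoidable: for any $S$ with $S\rightarrow W$ in $\mathcal{G}$, the path $S\rightarrow W\leftarrow U\rightarrow Y$ is an inducing path (because $W\in An(Y)$), so $S$ is adjacent to $Y$ in $\mathcal{M}$ --- indeed this adjacency is exactly what the paper's Lemma~\ref{lemma:possAnIV} exploits. Such an edge survives naive deletion of $W\rightarrow Y$ from $\mathcal{M}$, so under the naive reading of $\mathcal{M}_{\underline{W}}$ even a standard IV $S$ would violate $S\Vbar_m Y\mid\mathbf{Z}$ for every $\mathbf{Z}$, and the commuting statement you rely on would be false. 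Your closing argument (that $W$ can occur only as a blocked collider on $S$--$Y$ paths) disposes of paths \emph{through} $W$ but says nothing about these direct $S\rightarrow Y$ adjacencies. To close the gap you must first state the precise definition of ``AIV in MAG'' and of the manipulated MAG used in~\cite{cheng2022ancestral} (neither is given in this paper) and then prove the separation claim with respect to that specific graph; as written, the proposal does not establish the proposition.
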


\begin{theorem}[Conditioning set of a given AIV in PAG~\cite{cheng2022ancestral}]
	\label{theo:IV_PAG}
	Given a DAG $\mathcal{G} = (\mathbf{X}\cup\mathbf{U}\cup\{W, Y\}, \mathbf{E}')$ with the edges $W\rightarrow Y$ and $W\leftarrow U\rightarrow Y$ in $\mathbf{E}'$, and $U\in\mathbf{U}$, and let $\mathcal{M} = (\mathbf{X}\cup\{W, Y\}, \mathbf{E})$ be the MAG mapped from $\mathcal{G}$. From data, the mapped MAG $\mathcal{M}$ is represented by a PAG $\mathcal{P}=(\mathbf{X}\cup\{W, Y\}, \mathbf{E}'')$. For a given ancestral IV $S$ which is a cause or spouse of $W$, the set $PossAn(S\cup Y)\setminus\{W, S\}$ in the learned $\mathcal{P}$ is a set that instumentalises $S$ in the DAG $\mathcal{G}$.
\end{theorem}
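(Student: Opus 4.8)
The plan is to verify that the set $\mathbf{Z}:=PossAn_{\mathcal P}(S\cup Y)\setminus\{W,S\}$, computed in the learned PAG, fulfils the three defining conditions of an AIV with instrument $S$ in Definition~\ref{def:ancestralIV}: (i) $S\nVbar_d W\mid\mathbf Z$, (ii) $S\Vbar_d Y\mid\mathbf Z$ in $\mathcal G_{\underline W}$, and (iii) $\mathbf Z$ consists of (possible) ancestors of $S$ or $Y$ with $\mathbf Z\cap De(Y)=\emptyset$. I would move between $\mathcal G$ and $\mathcal P$ using two facts from the excerpt: the mapped MAG $\mathcal M$ preserves exactly the d-separations among the observed variables (so each d-separation statement in $\mathcal G$ is the matching m-separation statement in $\mathcal M$), and $\mathcal P$ encodes $[\mathcal M]$, so every directed edge of $\mathcal M$ carries no arrowhead at its tail in $\mathcal P$ and hence $An_{\mathcal M}(\cdot)\subseteq PossAn_{\mathcal P}(\cdot)$.

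First I would dispose of conditions (i) and (iii). For (i), since $S$ is a cause or spouse of $W$ the two nodes are adjacent in $\mathcal M$, and as $W,S\notin\mathbf Z$ no set can m-separate adjacent nodes; hence $S\nVbar_m W\mid\mathbf Z$ in $\mathcal M$, which transfers to $S\nVbar_d W\mid\mathbf Z$ in $\mathcal G$. For (iii), every element of $\mathbf Z$ is by construction a possible ancestor of $S$ or $Y$ in $\mathcal P$, and the pretreatment assumption on $\mathbf X$ gives $\mathbf Z\cap De(Y)=\emptyset$ immediately. I would also record here the \emph{ancestral closure} that powers the next step: if $C\to Z$ in $\mathcal M$ and $Z\in PossAn_{\mathcal P}(S\cup Y)$, then the $C$–$Z$ edge in $\mathcal P$ carries no arrowhead at $C$, so prepending it to a possibly directed route out of $Z$ keeps the route possibly directed and yields $C\in PossAn_{\mathcal P}(S\cup Y)$; thus $PossAn_{\mathcal P}(S\cup Y)$ is closed under $\mathcal M$-ancestors.

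The core is condition (ii), which I would establish in the MAG $\mathcal M'$ mapped from $\mathcal G_{\underline W}$; deleting $W\to Y$ leaves the latent confounder intact, so the $W$–$Y$ edge becomes $W\leftrightarrow Y$ in $\mathcal M'$. Because $S$ is assumed to be an AIV in $\mathcal G$, Proposition~\ref{pro:pro001} furnishes a witness set $\mathbf Z_0\subseteq(An(S)\cup An(Y))\setminus De(Y)$ that m-separates $S$ and $Y$ in $\mathcal M'$. My first move would be the containment $\mathbf Z_0\subseteq\mathbf Z$: the AIV definition already excludes $S$ from $\mathbf Z_0$, and since $S$ reaches $Y$ only through $W$ while $W\leftrightarrow Y$ in $\mathcal M'$, neither $S$ nor $W$ lies in $An_{\mathcal M'}(Y)$, so $W\notin\mathbf Z_0$; combined with $An_{\mathcal M'}(S)\cup An_{\mathcal M'}(Y)\subseteq An_{\mathcal M}(S)\cup An_{\mathcal M}(Y)\subseteq PossAn_{\mathcal P}(S\cup Y)$ this gives $\mathbf Z_0\subseteq\mathbf Z$. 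The task therefore reduces to the enlargement claim: extending the known separator $\mathbf Z_0$ to $\mathbf Z$ by the extra possible-ancestor nodes preserves m-separation.

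The step I expect to be the main obstacle is exactly this enlargement, and I would run it by contradiction. If some path $\pi$ m-connected $S$ and $Y$ given $\mathbf Z$ in $\mathcal M'$, then every non-collider on $\pi$ is outside $\mathbf Z$ while every collider has a descendant in $\mathbf Z\subseteq PossAn_{\mathcal P}(S\cup Y)$; by the ancestral closure above each collider is itself a possible ancestor of $S\cup Y$ and, being interior and distinct from $W$, already lies in $\mathbf Z$. I would then splice $\pi$ with the possibly directed routes from its activated colliders to the endpoints to manufacture a path that is m-connecting given the smaller set $\mathbf Z_0$ as well, contradicting that $\mathbf Z_0$ separates $S$ and $Y$. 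The delicate case—and the real crux of the theorem—is a collider that is only a possible, not a genuine, ancestor in $\mathcal M'$: I must check that conditioning on such a circle-marked node opens no fresh $S$–$Y$ connection, and it is here that the absence of directed and almost-directed cycles in the ancestral graph forces a neighbouring non-collider on the same path to be a (possible) ancestor, hence a member of $\mathbf Z$ that re-blocks $\pi$. Together with the pretreatment guarantee $\mathbf Z\cap De(Y)=\emptyset$ and the exclusion of $W$ (which alone prevents $\pi$ from escaping through the $W\leftrightarrow Y$ edge), this contradiction establishes $S\Vbar_m Y\mid\mathbf Z$ in $\mathcal M'$, \ie $S\Vbar_d Y\mid\mathbf Z$ in $\mathcal G_{\underline W}$, completing condition (ii).
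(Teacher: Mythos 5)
The paper itself contains no proof of Theorem~\ref{theo:IV_PAG}: it is imported as a background result from~\cite{cheng2022ancestral}, so there is no in-paper argument to compare yours against, and I can only assess your proposal on its own merits. Your architecture is the natural one: dispose of condition (i) by adjacency of $S$ and $W$, of condition (iii) by the pretreatment assumption, and reduce condition (ii) to the claim that enlarging a known ancestral separator $\mathbf{Z}_0$ to $\mathbf{Z}=PossAn_{\mathcal{P}}(S\cup Y)\setminus\{W,S\}$ preserves m-separation in the MAG $\mathcal{M}'$ of $\mathcal{G}_{\underline{W}}$. The containment $\mathbf{Z}_0\subseteq\mathbf{Z}$ and the closure of $PossAn_{\mathcal{P}}(S\cup Y)$ under $\mathcal{M}$-ancestors are correct and genuinely useful.

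The gap is that the enlargement step, which is the entire content of the theorem, is not actually carried out. Two concrete problems. First, the splicing move presupposes that an activated collider $C\in PossAn_{\mathcal{P}}(S\cup Y)$ on a $\mathbf{Z}$-connecting path can be joined to $S$ or $Y$ by a route that is \emph{directed in $\mathcal{M}'$}; but $C$ is only a possible ancestor, so the route available to you is merely possibly directed in $\mathcal{P}$ and may contain edges oriented against you in the true MAG, in which case the spliced walk is not m-connecting given $\mathbf{Z}_0$ and no contradiction is obtained. Second, your fallback for exactly this case (``acyclicity forces a neighbouring non-collider to be a possible ancestor, hence in $\mathbf{Z}$, re-blocking $\pi$'') is asserted rather than argued: the neighbour may be attached to $C$ by a bidirected edge, in which case it is not an ancestor of $C$ and your closure property yields nothing. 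What is actually needed is a lemma showing that any path m-connecting $S$ and $Y$ given $\mathbf{Z}$ in $\mathcal{M}'$ must be a collider path whose interior lies entirely in $\mathbf{Z}$ (every non-collider either feeds into such a collider or reaches an endpoint by a directed path, forcing it into $PossAn_{\mathcal{P}}(S\cup Y)$ and hence into $\mathbf{Z}$, which blocks the path; and $W$ is excluded because in $\mathcal{M}'$ every edge at $W$ is into $W$ and $W$ has no descendant in $\mathbf{Z}$), followed by a separate argument that such a collider path is incompatible with the existence of \emph{any} separator $\mathbf{Z}_0$. Neither half appears in your sketch, so condition (ii) does not go through as written. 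A minor further point: Definition~\ref{def:ancestralIV}(iii) requires genuine ancestors of $S$ or $Y$, whereas your $\mathbf{Z}$ only guarantees possible ancestors in $\mathcal{P}$; you silently weaken the definition, which is harmless for unbiasedness of the estimate but should be stated explicitly.
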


It is worth noting that Theorem~\ref{theo:IV_PAG} and the work in~\cite{cheng2022ancestral} are to discover conditioning set for a given AIV, rather than discovering AIVs and the conditioning set simultaneously. Hence, discovering an AIV and its conditioning set simultaneously from data remains unresolved, and it is the problem to be addressed in this work. 

\section{Discovering AIVs based on Graphical Causal Modeling}
\label{sec:algorithm}
In this section, we first introduce the generalised tetrad condition. Next, we propose a set of candidates AIVs in MAG. Then, we propose a theorem to guarantee that the generalised tetrad condition can be used to discover valid AIVs from data if there exists a pair of   AIVs. Finally, we develop a practical data-driven algorithm for estimating $\beta_{wy}$ from data.

\subsection{The Generalised tetrad Condition}
\label{subsec:tetrad}
Let $S_i$ and $S_j$ be a pair of CIVs given the conditioning set $\mathbf{X}\setminus\{S_i, S_j\}$. Let $\sigma_{s_{i}*y*\mathbf{z}}$  $(\sigma_{s_{j}*y*\mathbf{z}})$ denote the partial covariance of $S_i$ ($S_j$) and $Y$ given $\mathbf{Z}$, and $\sigma_{s_{i}*w*\mathbf{z}}$ $(\sigma_{s_{j}*w*\mathbf{z}})$ denote the partial covariance of $S_i$ ($S_j$) and $W$ given $\mathbf{Z}$. Then, we have $\beta_{wy}= \sigma_{s_{i}*y*\mathbf{z}}/\sigma_{s_{i}*w*\mathbf{z}} = \sigma_{s_{j}*y*\mathbf{z}}/\sigma_{s_{j}*w*\mathbf{z}}$, which gives us the following tetrad condition~\cite{silva2017learning}:
\begin{equation}
	\label{eq:tetradCon}
	\sigma_{s_{i}*y*\mathbf{z}}\sigma_{s_{j}*w*\mathbf{z}} - \sigma_{s_{i}*w*\mathbf{z}}\sigma_{s_{j}*y*\mathbf{z}} = 0
\end{equation}

The tetrad condition can be tested from data directly. It is a necessary condition for discovering valid CIVs, which means a pair of variables that are not valid CIVs can also satisfy the tetrad condition. 

We consider a more general setting, where a pair of AIVs $S_i$ and $S_j$ have different conditioning sets $\mathbf{Z}_i \subseteq \mathbf{X}\setminus\{S_i\}$ and $\mathbf{Z}_j \subseteq \mathbf{X}\setminus\{S_j\}$ respectively, and $\mathbf{Z}_i$ and $\mathbf{Z}_j$ do not need to be equal. Let $\sigma_{s_{i}*y*\mathbf{z}_i}$ $(\sigma_{s_{j}*y*\mathbf{z}_j})$ denote the partial covariance of $S_i$ ($S_j$) and $Y$ given $\mathbf{Z}_i$ ($\mathbf{Z}_j$), and $\sigma_{s_{i}*w*\mathbf{z}_i}$ ($\sigma_{s_{j}*w*\mathbf{z}_j}$) denote the partial covariance of $S_i$ ($S_j$) and $W$ given $\mathbf{Z}_i$ ($\mathbf{Z}_j$). Then we have  $\beta_{wy}= \sigma_{s_{i}*y*\mathbf{z}_i}/\sigma_{s_{i}*w*\mathbf{z}_i} = \sigma_{s_{j}*y*\mathbf{z}_j}/\sigma_{s_{j}*w*\mathbf{z}_j}$, which gives us the following generalised tetrad condition:	
\begin{equation}
	\label{eq:ourtetradCon}
	\sigma_{s_{i}*y*\mathbf{z}_i}\sigma_{s_{j}*w*\mathbf{z}_j} - \sigma_{s_{i}*w*\mathbf{z}_i}\sigma_{s_{j}*y*\mathbf{z}_j} = 0
\end{equation}

In the following, we will show that the generalised tetrad condition can be used for finding a pair of AIVs in data if there exists a pair of AIVs. In comparison with the tetrad condition used in IV.tetrad~\cite{silva2017learning}, the search space of the generalised tetrad condition is larger since $S_i, S_j, \mathbf{Z}_i$ and $\mathbf{Z}_j$ all vary. In the next section, we will develop a lemma to reduce the search space.

\subsection{The Theory for Discovering AIVs in MAG}
\label{Subsec:relaxedIV}
We aim to develop a practical solution for discovering AIVs directly from data by leveraging the property of a MAG.  

We first categorise AIVs into direct AIVs and indirect AIVs. When $S$ is an AIV and it is an adjacent node of the treatment in the given DAG, its ancestral or adjacent nodes may be AIVs. We call $S$ a direct AIV and the AIVs which are ancestral or adjacent nodes of $S$ indirect AIVs if they satisfy Definition~\ref{def:ancestralIV}. We consider direct AIVs since, in practice, indirect AIVs are rare. Importantly, direct AIVs have a property to support the data-driven search for their conditioning sets. In the following discussions, all AIVs are direct AIVs. We have the following conclusion for discovering the direct AIVs in a MAG $\mathcal{M}$.  

\begin{lemma}[A direct AIV in MAG]
	\label{lemma:possAnIV}
Given a DAG $\mathcal{G} = (\mathbf{X}\cup\mathbf{U}\cup\{W, Y\}, \mathbf{E}')$ with the edges $W\rightarrow Y$ and $W\leftarrow U\rightarrow Y$ in $\mathbf{E}'$, and $U\in\mathbf{U}$, and let $\mathcal{M} = (\mathbf{X}\cup\{W, Y\}, \mathbf{E})$ be the MAG mapped from $\mathcal{G}$. If $S$ is a direct AIV in the DAG $\mathcal{G}$, then $S\in Adj(Y)\setminus\{W\}$ in MAG $\mathcal{M}$.
\end{lemma}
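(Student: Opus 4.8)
The plan is to unpack what it means for $S$ to be a \emph{direct} AIV in $\mathcal{G}$ and then translate each consequence across the DAG-to-MAG mapping. By the definition of a direct AIV, $S$ is an AIV that is an adjacent node of $W$ in $\mathcal{G}$, so $S$ is either a parent of $W$ (a cause) or a spouse of $W$ (connected via a latent common cause $W \leftarrow U' \rightarrow S$). In either case $S \in An(W) \cup \{S : W \leftarrow U' \rightarrow S\}$, and crucially $S$ has a causal influence reaching $W$ through the edge adjacent to $W$. I would first record the two structural facts we get for free: condition (i) of Definition~\ref{def:ancestralIV} gives $S \nVbar_d W \mid \mathbf{Z}$, and condition (iii) tells us the instrumentalising set $\mathbf{Z}$ consists of ancestors of $Y$ or $S$ and contains no descendant of $Y$.

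First I would argue that $S$ and $Y$ must be adjacent in the MAG $\mathcal{M}$. The natural route is by contradiction: suppose $S \notin Adj(Y)$ in $\mathcal{M}$. By the defining property of a MAG (Definition~\ref{MAG}), every pair of non-adjacent nodes is m-separated by some subset of the remaining nodes, and more usefully the mapping preserves exactly the conditional-independence structure among the measured variables. I would then show that no matter which set $\mathbf{Z} \subseteq \mathbf{X}\setminus\{S\}$ we condition on, the AIV requirement forces a contradiction. The key is to combine the two d-separation facts through the shared path structure at $W$. Because $S$ is adjacent to $W$ (direct AIV) and $W \leftarrow U \rightarrow Y$ is present, the edge or latent arc connecting $S$ to $W$ together with the confounding arc $W \leftarrow U \rightarrow Y$ creates an m-connecting route from $S$ to $Y$ that does not pass through the removed edge $W \to Y$. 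Since this route survives in $\mathcal{G}_{\underline{W}}$, condition (ii) $S \Vbar_d Y \mid \mathbf{Z}$ in $\mathcal{G}_{\underline{W}}$ can only hold if $\mathbf{Z}$ blocks it, yet I would show that blocking it while simultaneously keeping $S \nVbar_d W \mid \mathbf{Z}$ (condition (i)) is impossible unless $S$ and $Y$ are in fact adjacent — an adjacency that, once translated to the MAG, is exactly $S \in Adj(Y)$.

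Concretely, the argument I expect to carry the weight is a path analysis in $\mathcal{M}$. A latent common cause of $S$ and $Y$ would appear as $S \leftrightarrow Y$; a direct causal effect would appear as $S \rightarrow Y$ or $S \circlerightarrow Y$; and in all these cases $S \in Adj(Y)$ holds directly. If instead $S$ reaches $Y$ only through longer directed or confounded paths, I would use the fact that $S$ is ancestral to $W$ (or spouse of $W$) and that $W$ is confounded with $Y$ via $U$, to exhibit a path between $S$ and $Y$ that any valid instrumentalising $\mathbf{Z}$ fails to block without also disconnecting $S$ from $W$. Finally, I would invoke the adjacency-preservation clause of the DAG-to-MAG construction together with Proposition~\ref{pro:pro001}, which guarantees that an AIV in $\mathcal{G}$ remains an AIV in $\mathcal{M}$ with the same conditioning set; this lets me transfer the adjacency conclusion into $\mathcal{M}$ and exclude $W$ to obtain $S \in Adj(Y)\setminus\{W\}$.

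The main obstacle I anticipate is handling the case distinction cleanly: a direct AIV can be either a parent or a spouse of $W$, and the m-connecting path from $S$ to $Y$ has a different shape in each subcase. The delicate point is ruling out that some clever choice of $\mathbf{Z}$ blocks the $S$-to-$Y$ path through a collider while leaving the $S$-to-$W$ path open; this requires carefully tracking which conditioning nodes are colliders on each path and invoking the ancestral restriction of condition (iii) ($\mathbf{Z}$ contains no descendant of $Y$, and consists of ancestors of $S$ or $Y$) to prevent $\mathbf{Z}$ from opening spurious paths. Establishing that the only way to reconcile conditions (i) and (ii) is genuine $S$-$Y$ adjacency is the crux, and it is where the specific structure of direct (as opposed to indirect) AIVs — namely the guaranteed adjacency to $W$ — does the essential work.
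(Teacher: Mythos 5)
Your overall strategy---reduce the lemma to the adjacency of $S$ and $Y$ in $\mathcal{M}$ and argue by contradiction---points in the right direction, but the engine you propose for the contradiction does not work, and it is not the one the paper uses. You claim that the route $S \astrightarrow W \leftarrow U \rightarrow Y$ ``survives'' in $\mathcal{G}_{\underline{W}}$ and that condition (ii) then forces $\mathbf{Z}$ to block it in a way incompatible with condition (i). But $W$ is a collider on that route, and every admissible conditioning set $\mathbf{Z}$ consists of pretreatment observed variables, so $W \notin \mathbf{Z}$ and $De(W)\cap\mathbf{Z}=\emptyset$; the route is therefore blocked automatically, and no tension with condition (i) ever arises. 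The standard IV of Fig.~\ref{fig:twocausaldiagrams}(a) is a counterexample to your central claim: there conditions (i) and (ii) hold with $\mathbf{Z}=\emptyset$ while $S$ is \emph{not} adjacent to $Y$ in the DAG. For the same reason your closing step---establish $S$--$Y$ adjacency and ``translate'' it into $\mathcal{M}$ via adjacency preservation together with Proposition~\ref{pro:pro001}---cannot succeed: the adjacency you need is a property of the MAG only, created by marginalising out $U$, and it need not exist in $\mathcal{G}$.

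The missing idea is that adjacency in a MAG is decided by separability over \emph{all} subsets of the observed variables, including sets containing $W$, not merely over admissible AIV conditioning sets. Because a direct AIV is into $W$ in $\mathcal{M}$ (either $S\rightarrow W$ or $S\leftrightarrow W$) and $U$ confounds $W$ and $Y$, any candidate separating set omitting $W$ leaves the connection through $W\rightarrow Y$ open, while any set containing $W$ opens the collider $S \astrightarrow W \leftarrow U \rightarrow Y$; hence no subset of $\mathbf{X}\cup\{W\}$ m-separates $S$ from $Y$, and maximality of $\mathcal{M}$ forces $S\in Adj(Y)$. The paper packages exactly this observation into the visibility machinery: the latent confounder $U$ makes the edge $W\rightarrow Y$ \emph{invisible} in $\mathcal{M}$ (Definition~\ref{Visibility}), and by the contrapositive of the visibility criterion any node with an edge into $W$ (or a collider path into $W$ through $Pa(Y)$) that were not adjacent to $Y$ would certify visibility, a contradiction. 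Recasting your proof around invisibility, or around the inseparability argument just sketched, would close the gap.
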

\begin{proof}
Firstly, the edges $W\rightarrow Y$ and $W\leftarrow U\rightarrow Y$ in the given DAG are represent by an invisible edge $W\rightarrow Y$ in the mapped MAG $\mathcal{M}$~\cite{zhang2008completeness}.	For an $S \in \mathbf{X}$ to be an eligible direct AIV in the DAG $\mathcal{G}$, there are only two cases in the mapped MAG $\mathcal{M}$. The first case is that $S$ has an edge $S\astrightarrow W$, then $S$ must have an edge into $Y$, \ie $S\in Adj(Y)\setminus\{W\}$, since otherwise $W\rightarrow Y$ in $\mathcal{M}$ is visible, which contradicts the invisible edge $W\rightarrow Y$ in $\mathcal{M}$. The second case is that $S$ has a collider path into $W$ and every collider on the path is in $Pa(Y)$, \ie $S\nVbar W\mid Pa(Y)\setminus\{W\}$, then $S$ must have an edge into $Y$, \ie $S\in Adj(Y)\setminus\{W\}$, since otherwise $W\rightarrow Y$ in $\mathcal{M}$ is visible, which contradicts the invisible edge $W\rightarrow Y$ in $\mathcal{M}$. Therefore, all direct AIVs in the DAG $\mathcal{G}$ are included in the set $\mathbf{S}=Adj(Y)\setminus\{Y\}$ of the mapped MAG $\mathcal{M}$.
\end{proof}

Lemma~\ref{lemma:possAnIV} provides a set of candidate direct AIVs $Adj(Y)\setminus\{W\}$ and reduces the search space of a direct AIV, \ie the search space of a direct AIV is reduced from $\mathbf{O}({|\mathbf{X}|})$ to $\mathbf{O}({|Adj(Y)\setminus\{W\}|})$ where $|Adj(Y)\setminus\{W\}|\ll |\mathbf{X}|$. 

Next, we will develop a theorem to show that the generalised tetrad condition in Eq.(\ref{eq:ourtetradCon}) can be used to discover a pair of direct AIVs  directly from data with latent variables. 
\begin{theorem}
\label{theo:tetrad}
Given a DAG $\mathcal{G} = (\mathbf{X}\cup\mathbf{U}\cup\{W, Y\}, \mathbf{E}')$ with the edges $W\rightarrow Y$ and $W\leftarrow U\rightarrow Y$ in $\mathbf{E}'$, and $U\in\mathbf{U}$, and let $\mathcal{M} = (\mathbf{X}\cup\{W, Y\}, \mathbf{E})$ be the MAG mapped from $\mathcal{G}$. Let  $\mathcal{P}=(\mathbf{X}\cup\{W, Y\}, \mathbf{E}'')$ be the PAG which encodes the set of MAGs Markov equivalent to $\mathcal{M}$. If there exists a pair of direct AIVs $\{S_i, S_j\} \subseteq\mathbf{X}$ in the DAG $\mathcal{G}$, then $\{S_i, S_j\}$ must be in $Adj(Y)\setminus\{W\}$ in the PAG $\mathcal{P}$. Moreover, the two sets, $\mathbf{Z}_i = possAn(S_i \cup Y)\setminus\{W, S_i\}$ and $\mathbf{Z}_j = possAn(S_j \cup Y)\setminus\{W, S_j\}$ in the PAG $\mathcal{P}$ instrumentalise $S_i$ and $S_j$ in the DAG $\mathcal{G}$, respectively. Hence, Eq.(\ref{eq:ourtetradCon}) holds for $S_i$ and $S_j$ and their conditioning sets $\mathbf{Z}_i$ and $\mathbf{Z}_j$.
\end{theorem}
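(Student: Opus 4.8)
The plan is to derive all three assertions of the theorem from the two results already in hand, Lemma~\ref{lemma:possAnIV} and Theorem~\ref{theo:IV_PAG}, and then to close with the elementary IV-ratio algebra that produced Eq.(\ref{eq:ourtetradCon}). The statement is essentially an assembly result: existence of a pair of direct AIVs is assumed, and each conclusion is a per-AIV application of an earlier fact, glued together at the end by equating two valid estimands of $\beta_{wy}$.

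For the membership assertion I would apply Lemma~\ref{lemma:possAnIV} once to $S_i$ and once to $S_j$: since both are direct AIVs in $\mathcal{G}$, the lemma places each of them in $Adj(Y)\setminus\{W\}$ of the mapped MAG $\mathcal{M}$. Because a PAG and its underlying MAG share identical adjacencies (Definition~\ref{def:pag}(i)), the same membership transfers to $\mathcal{P}$, giving $\{S_i, S_j\}\subseteq Adj(Y)\setminus\{W\}$ in $\mathcal{P}$. This step is a pure transfer of adjacency information from $\mathcal{M}$ to $\mathcal{P}$ and needs no new argument.

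For the instrumentalising-set assertion I would invoke Theorem~\ref{theo:IV_PAG} separately for each AIV, whose precondition is that the AIV be a cause or spouse of $W$. First I would argue a direct AIV meets this: being adjacent to $W$ in $\mathcal{G}$ and pretreatment (hence not a descendant of $W$), $S_i$ is a cause of $W$ in $\mathcal{G}$; this direct edge forces adjacency in $\mathcal{M}$, and since ancestral relations are preserved by the mapping, $S_i \in An(W)$ in $\mathcal{M}$, so the edge is $S_i\rightarrow W$, which in particular satisfies the cause-or-spouse precondition (and likewise for $S_j$). With the precondition verified, Theorem~\ref{theo:IV_PAG} yields that $\mathbf{Z}_i = possAn(S_i\cup Y)\setminus\{W, S_i\}$ and $\mathbf{Z}_j = possAn(S_j\cup Y)\setminus\{W, S_j\}$, computed from $\mathcal{P}$, instrumentalise $S_i$ and $S_j$ in $\mathcal{G}$ respectively.

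Finally, since $\mathbf{Z}_i$ instrumentalises $S_i$, the partial-covariance identity underlying Section~\ref{subsec:tetrad} gives the valid estimand $\beta_{wy} = \sigma_{s_i*y*\mathbf{z}_i}/\sigma_{s_i*w*\mathbf{z}_i}$, and symmetrically $\beta_{wy} = \sigma_{s_j*y*\mathbf{z}_j}/\sigma_{s_j*w*\mathbf{z}_j}$; equating the two and clearing denominators reproduces Eq.(\ref{eq:ourtetradCon}). The step I expect to be the main obstacle is the precondition check above: I must make sure that restricting to \emph{direct} AIVs is exactly what guarantees the edge into $W$ survives the DAG-to-MAG mapping, since only an AIV adjacent to $W$ in $\mathcal{G}$ is certain to remain adjacent to $W$ in $\mathcal{M}$ and thus to qualify for Theorem~\ref{theo:IV_PAG}; this is also what licenses reading the instrumentalising sets from $\mathcal{P}$ rather than from the unavailable DAG. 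The remaining adjacency transfer and the tetrad algebra are routine.
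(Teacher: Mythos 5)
Your proposal is correct and follows essentially the same route as the paper's proof: Lemma~\ref{lemma:possAnIV} for membership in $Adj(Y)\setminus\{W\}$, transfer to the PAG via shared adjacencies, Theorem~\ref{theo:IV_PAG} for the conditioning sets, and the IV-ratio algebra for Eq.~(\ref{eq:ourtetradCon}). Your explicit verification that a direct AIV satisfies the cause-or-spouse precondition of Theorem~\ref{theo:IV_PAG} is a small addition the paper leaves implicit, but it does not change the argument.
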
 
\begin{proof}
According to Lemma~\ref{lemma:possAnIV}, $Adj(Y)\setminus\{W\}$ in the mapped MAG $\mathcal{M}$ is the set of candidate direct AIVs in the DAG $\mathcal{G}$. Hence, the set $Adj(Y)\setminus\{W\}$ in the PAG $\mathcal{P}$ must be the set of candidate direct AIVs because the mapped MAG $\mathcal{M}$ is encoded in the PAG $\mathcal{P}$. Thus, if $S\in \mathbf{X}$ is a direct AIV in the DAG $\mathcal{G}$, then $S \in Adj(Y)\setminus\{W\}$ in the PAG $\mathcal{P}$ \ie $\{S_i, S_j\} \subseteq \mathbf{S}$ holds. According to Theorem~\ref{theo:IV_PAG}, $\mathbf{Z}_i = possAn(S_i \cup Y)\setminus\{W, S_i\}$ and $\mathbf{Z}_j = possAn(S_j \cup Y)\setminus\{W, S_j\}$ in the PAG $\mathcal{P}$ instrumentalise $S_i$ and $S_j$ in the DAG $\mathcal{G}$, respectively. Thus, we have $\beta_{wy} = \sigma_{s_{i}*y*\mathbf{z}_i}/\sigma_{s_{i}*w*\mathbf{z}_i} = \sigma_{s_{j}*y*\mathbf{z}_j}/\sigma_{s_{j}*w*\mathbf{z}_j}$. Therefore, $\sigma_{s_{i}*y*\mathbf{z}_i}\sigma_{s_{j}*w*\mathbf{z}_j} - \sigma_{s_{i}*w*\mathbf{z}_i}\sigma_{s_{j}*y*\mathbf{z}_j} = 0$, \ie Eq.(\ref{eq:ourtetradCon}) holds.
\end{proof}

Theorem~\ref{theo:tetrad} supports a data-driven algorithm to discover a pair of direct AIVs $\{S_i, S_j\}$ and their corresponding conditioning sets $\mathbf{Z}_i$ and $\mathbf{Z}_j$ by utilising the generalised tetrad condition. In the next section, based on the theorem, we will propose a practical algorithm for estimating $\beta_{wy}$ from data with latent variables.

Note that a significant number of direct AIVs are in both $Adj(W)\setminus\{Y\}$ and $Adj(Y)\setminus\{W\}$ in a MAG. Sometimes, they may be missed from $Adj(Y)\setminus\{W\}$ due to the false discoveries of the structure learning algorithm used~\cite{aliferis2010local,kalisch2012causal}.  In the corresponding DAG, the direct AIVs are closer to $W$ than $Y$. To avoid the random fluctuations without sacrificing much efficiency, in our developed practical algorithm, we extend the search space of Lemma~\ref{lemma:possAnIV} to $Adj(W\cup Y)\setminus\{W, Y\}$. This only adds minor additional costs to the search process.

\subsection{A Practical Algorithm for Estimating $\beta_{wy}$}
\label{subsec:datadrimeth}
We develop a practical data-driven algorithm, AIV.GT (\underline{A}ncestral \underline{IV} based on \underline{G}eneralised \underline{T}etrad condition), for estimating $\beta_{wy}$ from data with latent variables. The pseudocode of AIV.GT is listed in Algorithm~\ref{alg:AnIVsDatadd}. 

\begin{algorithm}[t]
	\caption{AIVs based on the Generalised Tetrad condition (AIV.GT)}
	\label{alg:AnIVsDatadd}
	\begin{algorithmic}[1]
		\STATE {\bfseries Input:} The set of pretreatment variables $\mathbf{X}$, the treatment $W$, outcome $Y$ and the dataset $\mathcal{D}$; $\alpha$ =0.05
		\STATE {\bfseries Output:} $\hat{\beta}_{wy}$, the causal effect of $W$ on $Y$, or \emph{NA}, i.e. lacking knowledge 
		\STATE Recover a PAG $\mathcal{P}$ from  $\mathcal{D}$ by using the \textit{rfci} algorithm
		\STATE Obtain $\mathbf{S}=Adj(W\cup Y)\setminus\{W, Y\}$ from $\mathcal{P}$
		\IF{$\left |\mathbf{S}\right|\leqslant 1$}
		\STATE {\bfseries return} \emph{NA}
		\ELSE
		\FOR{each $S_i\in \mathbf{S}$}
		\STATE $\mathbf{Z}_i \leftarrow PossAn(S_i\cup Y)\setminus\{S_i,W,Y\}$
		\STATE $\hat{\beta}_{i} \leftarrow TSLS(W, Y, S_i, \mathbf{Z}_i, \mathcal{D})$ 
		\ENDFOR
		\STATE Initialise $\mathbf{Q}=\emptyset$
		\FOR{each pair $(S_i, S_j) \in \mathbf{S}$}
		\IF{$Test.tetrad(W, Y, S_i, S_j, \mathbf{Z}_i, \mathbf{Z}_j, \mathcal{D}, \alpha)$}
		\STATE $\epsilon_{ij} = \left|\sigma_{s_{i}*y*\mathbf{z}_i}\sigma_{s_{j}*w*\mathbf{z}_j}- \sigma_{s_{i}*w*\mathbf{z}_i}\sigma_{s_{j}*y*\mathbf{z}_j} \right|$
		\STATE $\lambda_{ij} =  \left |\epsilon_{ij} -\delta_{ij} \right|$ where $\delta_{ij} = \left | \hat{\beta}_{i} -\hat{\beta}_{j} \right|$ 
		\STATE $\mathbf{Q} \leftarrow \mathbf{Q}\cup\lambda_{ij}$
		\ENDIF
		\ENDFOR
		\IF {$\left|\mathbf{Q}\right|$ = $\emptyset$}
		\STATE {{\bfseries return} \emph{NA}}
		\ELSE
		\STATE {\bfseries return} $\hat{\beta}_{wy} = mean(\hat{\beta}_{i}, \hat{\beta}_{j})$ \\ where the consistent score $\lambda_{ij}$ is the smallest in $\mathbf{Q}$
		\ENDIF
		\ENDIF
	\end{algorithmic}
\end{algorithm}

AIV.GT aims to search for the pair of   AIVs from data directly without domain knowledge. The generalised tetrad condition in Eq.(\ref{eq:ourtetradCon}) held by a pair of   AIVs and their conditioning sets as described in Theorem~\ref{theo:tetrad} if there is a pair of   IVs $\{S_i, S_j\}$ in data.

To obtain a reliable result, we propose a consistency score to assess which paired variables are the most likely AIVs based on the generalised tetrad condition. Let $\epsilon_{ij} = \left |\sigma_{s_{i}*y*\mathbf{z}_i}\sigma_{s_{j}*w*\mathbf{z}_j} - \sigma_{s_{i}*w*\mathbf{z}_i}\sigma_{s_{j}*y*\mathbf{z}_j} \right|$, and $\delta_{ij} = \left | \hat{\beta}_{i} -\hat{\beta}_{j} \right|$ where $\hat{\beta}_{i}$ and $\hat{\beta}_{j}$ are the estimated causal effects of $W$ on $Y$ by using $S_i$ and $S_j$ as an instrument, respectively. The consistency score is defined as $\lambda_{ij} =  \left |\epsilon_{ij} -\delta_{ij} \right|$.

The justification of the consistency score is that $\epsilon_{ij}$ is expected to be close to 0, and the same with $\delta_{ij}$ 0 if the variables $S_i$ and $S_j$ are AIVs. Theoretically, the pair of variables with either the smallest $\epsilon_{ij}$ or $\delta_{ij}$ are most likely to be the pair of   IVs, but in practical cases, a pair of variables passing the generalised tetrad condition test (\ie a small enough $\epsilon_{ij}$), may have a large $\delta_{ij}$, or vice versa, since the pair of variables are not   IVs. So we use their difference $\lambda_{ij}$ to avoid such cases because $\lambda_{ij}$ must be smaller than both $\epsilon_{ij}$ and $\delta_{ij}$. Under the assumption that there exists at least a pair of   IVs,  if the consistency score of a pair of variables is the smallest, then the pair are most likely to be AIVs. The paired variables with the minimal consistency score is returned as the result of AIV.GT. 

The AIV.GT algorithm is divided into two parts. The first part (Lines 3 to 11) is to obtain all candidate AIV pairs and the possible causal effects of $W$ on $Y$ estimated using these pairs. Line 3 aims to learn a PAG $\mathcal{P}$ from data by using a causal structure learning algorithm~\cite{scutari2009learning,kalisch2012causal}. We use \textit{rfci} (really fast causal inference)~\cite{colombo2012learning} in AIV.GT. Line 4 aims to get the set of candidate AIVs from the learned PAG $\mathcal{P}$. Line 5 tests the size of $\mathbf{S}$ and if $\left |\mathbf{S}\right|\leqslant 1$, then AIV.GT returns \textit{NA} due to lack of knowledge. Lines 8 to 11 are to estimate the causal effect $\hat{\beta}_{wy}$ using each candidate AIV. Line 9 is to find the conditioning set $\mathbf{Z}_i$ for a candidate AIV $S_i$ based on Theorem~\ref{theo:IV_PAG}. Line 10, the function $TSLS()$ is the estimator of two-stage least squares (TSLS) by using $S_i$ as an IV and conditioning on $\mathbf{Z}_i$ for calculating $\hat{\beta}_{i}$. 

The second part of AIV.GT is to discover the pair of   AIVs. Line 12 is to initialise the set of consistency scores $\mathbf{Q}$. Lines 13 to 19 are to check the validity of each pair of candidate AIVs based on Theorem~\ref{theo:tetrad}. If the generalised tetrad condition holds on a pair of candidate AIVs, then calculate their consistency score. Line 14, the function $Test.tetrad()$ is implemented by using the \textit{Wishart test} \wrt the generalised tetrad condition~\cite{wishart1928sampling,spirtes2000causation}. $Test.tetrad()$ returns \textit{TRUE} if and only if the set of candidate pair variables returns a p-value greater than the significant level $\alpha$ ($\alpha$ =0.05 in this work). Lines 15 to 17 are to obtain the consistency score of each paired AIVs satisfying the generalised tetrad condition. In Lines 20 and 21, if $\mathbf{Q}$ is an empty set, then no pair of variables has passed the tetrad condition test and the algorithm returns \textit{NA}. In Lines 22 to 24, AIV.GT returns the mean causal effect of the pair of variables with the smallest $\lambda_{ij}$ in $\mathbf{Q}$. 

\noindent \textbf{Time Complexity Analysis:} Three factors contribute to the time complexity of AIV.GT. The first contributing factor is the learning of a PAG $\mathcal{P}$ from data and finding $\mathbf{S}$ from $\mathcal{P}$, which largely relies on the \textit{rfci} algorithm. In the worst situation, \textit{rfci} has a complexity of $\mathbf{O}(2^{r}*n)$, where $r$ is the maximum degree of a node in the underlying causal MAG and $n$ is the sample size. In most cases, the average degree of a causal Bayesian network is 2 to 5~\cite{scutari2009learning}, and most of the underlying MAGs are sparse in real-world applications. Hence, the time complexity of \textit{rfci} is lower~\cite{colombo2012learning}. The complexity of Line 4 is $\mathbf{O}(1)$ since it reads from $\mathcal{P}$. The second factor is estimating all possible causal effects, \ie Lines 8 to 11 in Algorithm~\ref{alg:AnIVsDatadd}. Noting that obtaining $\mathbf{Z}_i$ takes $\mathbf{O}(1)$ and calculating $\hat{\beta}_i$ needs $\mathbf{O}(n*p^{2})$. Hence, the whole time complexity of this part is $\mathbf{O}(|\mathbf{S}|*n*p^{2})$. The third factor is finding the pair of   IVs from $\mathbf{S}$ and time complexity relies on the size of $\mathbf{S}$ (pairwise search for a pair of VIs) and calculating covariance, which all together takes $\mathbf{O}(|\mathbf{S}|^{2}*n*p^{2})$. Therefore, the overall complexity of AIV.GT is $\mathbf{O}(2^{r}*n + |\mathbf{S}|*n*p^{2} +|\mathbf{S}|^{2}*n*p^{2}) =\mathbf{O}(2^{r}*n +|\mathbf{S}|^{2}*n*p^{2})$. Therefore, the complexity of AIV.GT is largely attribute to the \textit{rfci} algorithm and searching for a pair satisfying the generalised tetrad criterion. 

\section{Experiments}
\label{sec:exp}
We assess the performance of AIV.GT by comparing it to the state-of-the-art causal effect estimators, firstly with a simulation study. Then, we conduct experiments on two real-world datasets that have been used for a long time in instrumental variable research~\cite{card1993using,sjolander2019instrumental} to show that AIV.GT can be applied in real-world applications. 

\begin{figure*}[ht]
	\vskip -0.1in
	\begin{center}
		\centerline{\includegraphics[scale=0.59]{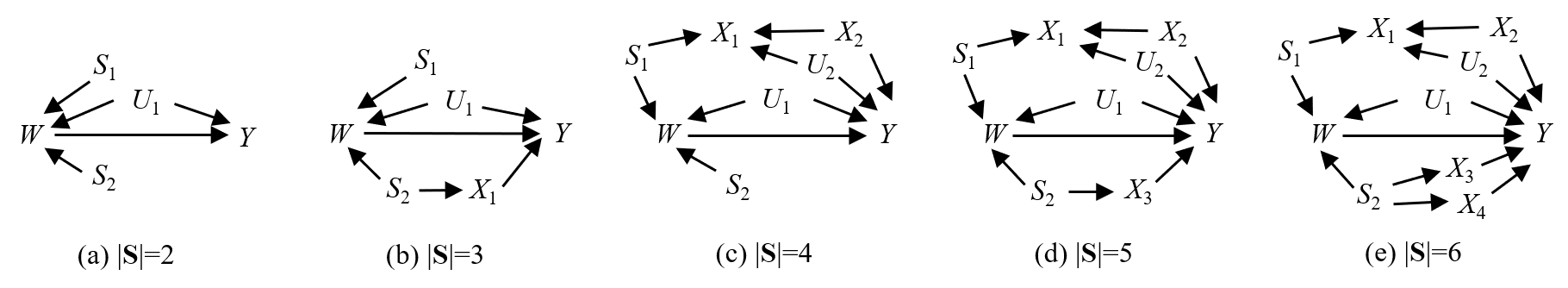}}\vskip -0.1in
		\caption{The core of the true causal DAGs over measured and unmeasured used to generate synthetic datasets. $\left|\mathbf{S}\right|$ denotes the number of measured variables $Adj(W\cup Y)\setminus\{W, Y\}$. There are two valid IVs $S_1$ and $S_2$ in all causal DAGs. In (a), $S_1$ and $S_2$ are standard IVs. In (b), $S_1$ is a standard IV and $S_2$ is a CIV conditioning on $X_1$. In (c), $S_1$ is a CIV conditioning on $\emptyset$ since $X_1$ is a collider and $S_2$ is a standard IV. In (d), $S_1$ and $S_2$ are CIVs conditioning on $\emptyset$ and $X_3$, respectively. In (e), $S_1$ is a condition IV conditioning on $\emptyset$ and $S_2$ is a CIV conditioning on $\{X_3, X_4\}$.}
		\label{fig:theDAGs_syn_datasets}
	\end{center}
	\vskip -0.1in
\end{figure*}

The estimators compared include (1) LSR, least squares regression $Y$ on $\{W, \mathbf{X}\}$; (2) TSLS, two-stage least squares (TSLS~\cite{angrist1995two}) of $Y$ on $W$ using all variables $\mathbf{X}$ as standard IVs; (3) some invalid some valid IV estimator (sisVIVE)~\cite{kang2016instrumental}; (4) IV.tetrad method~\cite{silva2017learning}. LSR is not an IV based method. It is included since it is frequently used in Machine Learning disregarding bias of latent variables in data, and it is used as a baseline. All other three comparison estimators do not need a nominated IV. TSLS is a standard IV estimator, and it is also used as a baseline. sisVIVE and IV.tetrad are two most related methods and have been discussed in the Introduction.  

\noindent \textbf{The Implementations of Estimators in Sections~\ref{subsec:simulation} and~\ref{subsubsec:withoutIV}.} The method OLS is implemented by the function \emph{cov} in the $\mathbf{R}$ package \emph{stats}. The TSLS is programmed by the functions \emph{cov} in the $\mathbf{R}$ package \emph{stats} and \emph{solve} in the base. The implementation of LSR is same with TSLS, \ie using the functions \emph{cov} and \emph{solve}. The implementation of sisVIVE is based on the function \emph{sisVIVE} in the $\mathbf{R}$ package \emph{sisVIVE}.	The implementation of IV.tetrad is retrieved from the authors' site\footnote{\url{http://www.homepages.ucl.ac.uk/~ucgtrbd/code/iv_discovery}}. The parameter of \emph{num$\_$ivs} is set to 3 (2 for VitD). %We report the average result of 500 bootstrapping as the final  $\hat{\beta}_{wy}$ for IV.tetrad. 
AIV.GT is implemented by using the function \emph{rfci} in the $\mathbf{R}$ packages \emph{pcalg}, \emph{cov} in the $\mathbf{R}$ package \emph{stats}, \emph{solve} in the base and the functions in IV.tetrad. 

\noindent \textbf{The Implementations of Estimators in Section~\ref{subsubsec:withIV}.} The implementations of the compared estimators that require a known IV are introduced as follows.  The estimator TSLS is implemented by the function \emph{ivreg} in the $\mathbf{R}$ package \emph{AER}~\cite{greene2003econometric}. The implementation of TSLS.CIV is based on the functions \emph{glm} and \emph{ivglm} from the $\mathbf{R}$ packages \emph{stats} and \emph{ivtools}~\cite{sjolander2019instrumental}. FIVR is implemented by the function \emph{instrumental}$\_$\emph{forest} in the $\mathbf{R}$ package \emph{grf}~\cite{athey2019generalized}. All parameters in FIVR are default. AIViP is implemented by the functions \emph{rfci} in $\mathbf{R}$ package \emph{pcalg}~\cite{kalisch2012causal}, \emph{glm} in $\mathbf{R}$ package \emph{stats} and \emph{ivglm} in $\mathbf{R}$ package \emph{ivtools}~\cite{sjolander2019instrumental}. 

\noindent \textbf{Evaluation Metrics \& Parameter Setting.} For the simulation study, we have the ground truth of $\beta_{wy}$, so we report the estimation bias: $\left |(\hat{\beta}_{wy}-\beta_{wy})/\beta_{wy} \right|*100$ (\%). In the experiments with real-world datasets, we empirically evaluate all estimators, and then compare AIV.GT with four additional IV-based estimators that require a nominated IV. The significant level $\alpha$ is set to 0.05 for the functions of \textit{rfci} and $Test.tetrad()$ in all experiments.

\subsection{Simulation Study}
\label{subsec:simulation}
The goal of this set of experiments is to test the effectiveness of AIV.GT with  and without colliders in the covariate set in comparison with various causal effect estimators. We utilise five true DAGs over $\mathbf{X}\cup\mathbf{U}\cup\{W, Y\}$ to generate five synthetic datasets with latent variables. The five true DAGs are shown in Fig.~\ref{fig:theDAGs_syn_datasets}. In addition to the variables in the five true DAGs, 20 additional measured variables for each dataset are generated as noise variables that are related to each other but not to the variables in the five DAGs. The additional 20 variables are generated by a multivariate normal distribution. Next, we will separately introduce the details of each synthetic dataset generation based on the five true DAGs in Fig.~\ref{fig:theDAGs_syn_datasets}.   

The synthetic dataset (a) is generated from the DAG (a) in Fig.~\ref{fig:theDAGs_syn_datasets}, and the specifications are as following: $U_1\sim Bernoulli(0.5)$ and $S_1, S_2 \sim N(0, 1)$, in which $N(,)$ denotes the normal distribution. The treatment $W$ is generated from $n$ ($n$ denotes the sample size) Bernoulli trials by using the assignment probability $P(W=1\mid U_1, S_1, S_2) = [1+exp\{1-3*U_1-3*S_1-3*S_2\}]$. The potential outcome is generated from $Y_{W} = 2 + 2*W + 3*U_1 +\epsilon_{w}$ where $\epsilon_{w}\sim N(0,1)$.

The synthetic dataset (b) is generated from the DAG (b) in Fig.~\ref{fig:theDAGs_syn_datasets}, and the specifications are as following: $U_1\sim Bernoulli(0.5)$,  $S_1, S_2 \sim N(0, 1)$ and $\epsilon_{X_1} \sim N(0, 0.5)$.  The measured variable $X_1 = 0.8*S_2 + \epsilon_{X_1}$.The treatment $W$ is generated by $P(W=1\mid U_1, S_1, S_2) = [1+exp\{1-3*U_1-3*S_1-3*S_2\}]$. The potential outcome is generated from $Y_{W} = 2 + 2*W + 3*U_1 + 3*X_1  +\epsilon_{w}$.

The synthetic dataset (c) is generated from the DAG (c) in Fig.~\ref{fig:theDAGs_syn_datasets}, and the specifications are as following: $U_1\sim Bernoulli(0.5)$, $S_1, S_2, U_2, X_2 \sim N(0, 1)$ and $\epsilon_{X_1} \sim N(0, 0.5)$. The measured variable $X_1$ is generated by $X_1 = 0.3 + S_1 + X_2 + U_2 + \epsilon_{X_1}$. The treatment $W$ is generated by $P(W=1\mid U_1, S_1, S_2) = [1+exp\{1-3*U_1-3*S_1-3*S_2\}]$. The potential outcome is generated from $Y_{W} = 2 + 2*W + 3*U_1 + 2*U_2 + 2*X_2 +\epsilon_{w}$.

The synthetic dataset (d) is generated from the DAG (d) in Fig.~\ref{fig:theDAGs_syn_datasets}, and the specifications are as following: $U_1\sim Bernoulli(0.5)$, $S_1, S_2, U_2, X_2 \sim N(0, 1)$ and $\epsilon_{X_1}, \epsilon_{X_3}\sim N(0, 0.5)$. The measured variables $X_1$ and $X_3$ are generated by $X_1 = 0.3+  S_1 + X_2 + 1.5*U_2 +\epsilon_{X_1}$ and $X_3 = 0.8*S_2 +\epsilon_{X_3}$, respectively. The treatment $W$ is generated by  $P(W=1\mid U_1, S_1, S_2) = [1+exp\{1-3*U_1-3*S_1-3*S_2\}]$. The potential outcome is generated by $Y_{W} = 2 + 2*W + 3*U_1 + 2*U_2 + 2*X_2 +2*X_3 +\epsilon_{w}$.

The synthetic dataset (e) is generated from the DAG (e) in Fig.~\ref{fig:theDAGs_syn_datasets}, and the specifications are as following: $U_1\sim Bernoulli(0.5)$, $S_1, S_2, U_2, X_2 \sim N(0, 1)$ and $\epsilon_{X_1}, \epsilon_{X_3}, \epsilon_{X_4} \sim N(0, 0.5)$. The measured variables $X_1$, $X_3$ and $X_4$ are generated by $X_1 = 0.3+  S_1 + X_2 + 1.5*U_2 +\epsilon_{X_1}$,  $X_3 = 0.8*S_2 +\epsilon_{X_3}$ and  $X_4 = 0.8*S_2 +\epsilon_{X_4}$, respectively. The treatment $W$ is generated by $P(W=1\mid U_1, S_1, S_2) = [1+exp\{1-3*U_1-3*S_1-3*S_2\}]$. The potential outcome is generated by $Y_{W} = 2 + 2*W + 3*U_1 + 2*U_2 + 2*X_2 +2*X_3 +2*X_4 +\epsilon_{w}$.

The suitability of datasets with the method requirements is summarised as Table~\ref{tab_suitability}.

\begin{table}[ht]
	\centering
	\caption{Satisfaction (tick)/violation (cross) of the assumptions of a method by a dataset. ``$\checkmark$ ?'' means the problem of collider bias suffered by IV.tetrad.}
	\begin{tabular}{|c|c c c c c|}
		\hline 
		& (a)  & (b) & (c) & (d) & (e) \\
		\hline
		LSR & $\times$ & $\times$ & $\times$ & $\times$ & $\times$ \\
		TSLS & $\checkmark$ & $\times$ & $\times$ & $\times$ & $\times$ \\
		sisVIVE & $\checkmark$ & $\checkmark$ & $\checkmark$ & $\times$ & $\times$ \\
		IV.tetrad & $\checkmark$ & $\checkmark$ & $\checkmark$ ? & $\checkmark$ ? & $\checkmark$ ? \\
		AIV.GT & $\checkmark$ & $\checkmark$ & $\checkmark$ & $\checkmark$ & $\checkmark$ \\
		\hline
	\end{tabular}
	\label{tab_suitability}
\end{table}

\begin{figure}[t]
	\vskip -0.02in
	\begin{center}
		\centerline{\includegraphics[scale=0.4]{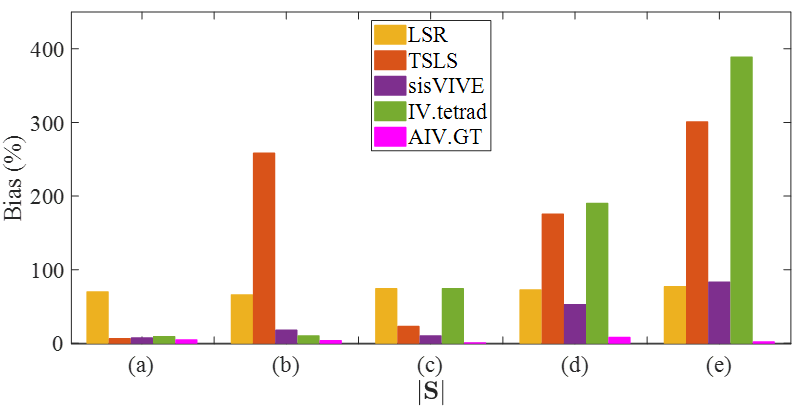}}
		\vskip -0.18in
		\caption{Estimation Bias (\%) of the estimators on five synthetic datasets. AIV.GT has the smallest bias on all datasets. }
		\label{fig:Syn_Bias}
	\end{center}
	\vskip -0.3in
\end{figure}

\paragraph{Results.} The estimation biases AIV.GT and the four compared estimators on the synthetic datasets are visualised in Fig.~\ref{fig:Syn_Bias}. From Fig.~\ref{fig:Syn_Bias}, we have the following observations: (1) The biases of LSR are large on all datasets. This is because it does not consider any bias of latent variables in data. (2) TSLS has a low bias on dataset (a) because $S_1$ and $S_2$ are standard IVs in this dataset. It has large biases on all other datasets since there are no standard IVs in the datasets. (3) sisVIVE works well on the first three datasets, i.e. (a) $-$ (c), since its requirement (i.e. a half of covariates are IVs) is satisfied. It works poorly on datasets (d) and (e) because of the assumption is violated. (4) IV.tetrad has a low biases on the first two datasets, but large biases on the last three datasets because the last three datasets contain colliders in the covariate sets. All datasets satisfy the assumption of IV.tetrad (i.e. a pair of   CIVs), but it suffers the problem of collider bias as identified in this paper. (5) AIV.GT obtains consistent results and has the lowest bias on all datasets. 

In sum, AIV.GT is able to obtain an unbiased $\hat{\beta}_{wy}$ from data with latent variables when there exists a pair of AIVs. AIV.GT overcomes the collider bias suffered by IV.tetrad.

\subsection{Experiments on Two Real-world Datasets}
\label{subsec:realworld}
It is challenging to evaluate the performance of causal effect estimators, including AIV.GT on real-world datasets because the ground truth causal DAG and $\beta_{wy}$ are not available. We select two real-world datasets with empirical estimates available in the literature, including Vitamin D data (VitD)~\cite{martinussen2019instrumental,sjolander2019instrumental} and Schoolingreturns~\cite{card1993using}. The two datasets have been extensively studied and analysed before and each has a nominated AIV. Therefore, it is credible to choose them as the benchmark datasets to evaluate AIV.GT.

The two datasets have the nominated AIV \wrt $(W, Y)$, but the conditioning sets are unknown. There is not an available algorithm in literature to discover the conditioning set that instrumentalise the nominated AIV on both datasets. Therefore, we divide the experiments on both datasets into two parts: (1) experiments on AIV.GT in comparison with four estimators without nominated AIVs; (2) experiments comparing AIV.GT with four additional IV estimators that require a nominated AIV. 

\subsubsection{Details of the Two Real-world Datasets}
\paragraph{Vitamin D (VitD).} This dataset was collected from a cohort study of vitamin D status on mortality, \ie the potential effect of VitD on death, reported in~\cite{martinussen2019instrumental}. The dataset contains 2571 individuals and 5 variables: age, filaggrin (a binary variable indicating filaggrin mutations), vitd (a continuous variable measured as serum 25-OH-D (nmol/L)), time (follow-up time), and death (binary outcome indicating whether an individual died during follow-up)~\cite{sjolander2019instrumental}. A measured value of vitamin D less than 30 nmol/L implies vitamin D deficiency. We take the estimated $\hat{\beta}_{wy} = 2.01$ with 95\% confidence interval $(0.96, 4.26)$ from the literature~\cite{martinussen2019instrumental} as the reference causal effect.

\paragraph{Schoolingreturns.} This dataset is from the national longitudinal survey of youth (NLSY) of US young employees, aged range from 24 to 34~\cite{card1993using}. The dataset contains 3010 individuals and 19 variables. The treatment is the education of employees, and the outcome is raw wages in 1976 (in cents per hour). The covariates include experience (years of labour market experience), ethnicity (a factor indicating ethnicity), resident information of an individual, age, nearcollege (whether an individual grew up near a 4-year college), Education in 1966 (\textit{education66}), marital status, father's educational attainment (\textit{feducation}), mother's educational attainment (\textit{meducation}), Ordered factor coding family education class (\textit{fameducation}), and so on. A goal of the studies on this dataset is to investigate the causal effect of education on earnings. We take $\hat{\beta}_{wy} = 13.29$\% with 95\% confidence interval $(0.0484, 0.2175)$ from~\cite{verbeek2008guide} as the reference causal effect.

\subsubsection{Comparing AIV.GT with the Estimators without Requiring a Known IV}
\label{subsubsec:withoutIV}
We conduct experiments on the two real-world datasets to assess AIV.GT against the four estimators that do not require nominated IV as in Section~\ref{subsec:simulation} with the simulated data. All experimental results are visualised in Fig.~\ref{fig:tworealworlddatasetswithoutIV} for VitD and Schoolingreturns, respectively. 

\begin{figure}[t]
	\vskip -0.02in
	\begin{center}
		\centerline{\includegraphics[scale=0.4]{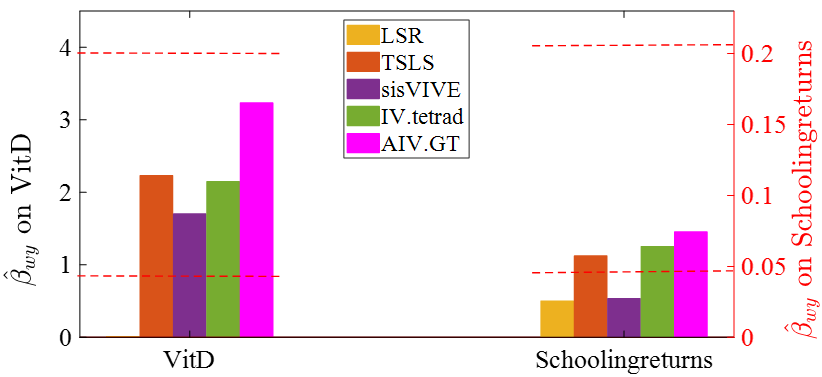}}
		\vskip -0.1in
		\caption{The experimental results of the five estimators without a given IV for all estimators on two real-world datasets. The two dotted lines represent empirically estimated causal effect with 95\% confidence interval. Noting that the estimated causal effect of LSR on VitD is close to zero and not visible in the left panel.  }
		\label{fig:tworealworlddatasetswithoutIV}
	\end{center}
	\vskip -0.2in
\end{figure}

\paragraph{Results on VitD.} From Fig.~\ref{fig:tworealworlddatasetswithoutIV}, we have the following observations: (1) the estimated result of LSR is close to 0 and far away from the 95\% confidence interval of the empirical estimation; (2) the estimated results of TSLS, sisVIVE, IV.tetrad and AIV.GT are close to the reference causal effect 2.01 and fall into the 95\% confidence interval.

\paragraph{Results on Schoolingreturns.} According to Fig.~\ref{fig:tworealworlddatasetswithoutIV}, we have the following findings: (1) the estimated result of TSLS is at the bottom of the 95\% confidence interval; (2) the estimated results of LSR and sisVIVE fall outside of the empirical interval. It is very likely that their assumptions have not been satisfied; (3) the estimated results of IV.tetrad and AIV.GT are in the empirical interval. They are consistency with the reference causal effect~\cite{verbeek2008guide}.  The consistency between the results IV.tetrad and AIV.GT is likely due to the reason that they have found proper conditional sets.

The experiments show that AIV-GT can obtain consistent estimations in both real-world datasets.    

\subsubsection{Comparing AIV.GT with the Estimators with Known IVs}
\label{subsubsec:withIV}
We add the four more comparison methods that require the given IVs, which are (1) TSLS.IV, TSLS with a given IV; (2) TSLS.CIV~\cite{glymour2012credible}, TSLS with a given CIV $S$ by conditioning on $\mathbf{X}\setminus\{S\}$; (3) FIVR, causal random forest for instrumental variable regression with a given CIV $S$ and conditioning on $\mathbf{X}\setminus\{S\}$~\cite{athey2019generalized}; (4) AIViP~\cite{cheng2022ancestral}, Ancestral IV estimator in PAG.

The two datasets have nominated IVs in the literature. The indicator of \textit{filaggrin} was used as an IV in VitD~\cite{martinussen2019instrumental} and Card~\cite{card1993using} used geographical proximity to a college, \ie \emph{nearcollege} as an IV in Schoolingreturns. All results of the above four estimators and AIV.GT are visualised in Fig.~\ref{fig:tworealworlddatasets}. 

\begin{figure}[t]
	\vskip -0.02in
	\begin{center}
		\centerline{\includegraphics[scale=0.435]{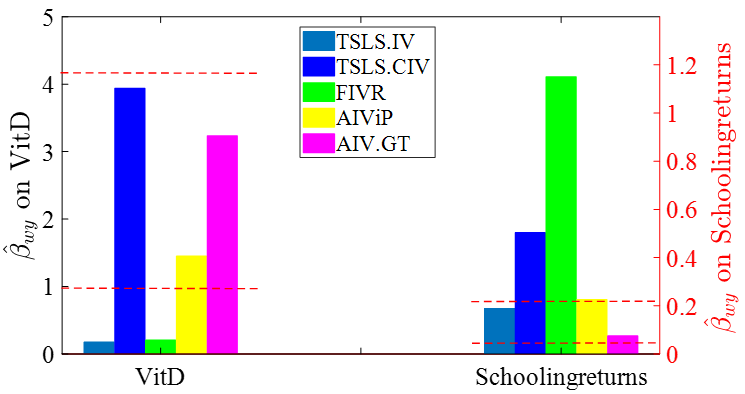}}
		\vskip -0.1in
		\caption{The experimental results of five estimators with a given IV for all comparison methods (AIV.GT does not use the given IV) on both real-world datasets. The two dotted lines represent empirically interval with 95\% confidence interval. }
		\label{fig:tworealworlddatasets}
	\end{center}
	\vskip -0.2in
\end{figure}

\paragraph{Results on VitD.} AIV.GT discovers $\{age, time\}$ as a pair of   AIVs. They are reasonable AIVs since they affect $W$ (vitd, vitamin D status) but do not directly affect $Y$ (death). From Fig.~\ref{fig:tworealworlddatasets}, we see that the results of TSLS.CIV, AIViP and AIV.GT are in the middle of 95\% empirical interval of the reference causal effect. 

\paragraph{Results on Schoolingreturns.} AIV.GT discovers $\{feducation, fameducation\}$ as a pair of   AIVs. They are valid IVs because father's educational attainment and family education class affect their child's education, but do not directly affect the child's income. From Fig.~\ref{fig:tworealworlddatasets}, we observe that the results of TSLS.IV, AIViP and AIV.GT are in the middle of the 95\% empirical interval of the reference causal effect.

In a word, AIV.GT, which does not need given AIVs, performs better or comparable with other methods which require a given IV. This shows the potential of the AIV.GT in a broader range of real-world applications.

\section{Related Work}
\label{sec:Relwork}
Latent variables are the major obstacle to estimating causal effect from observational data~\cite{greenland2003quantifying,pearl2009causality,chen2021causal}. When the treatment and outcome are confounded, IV methods~\cite{martens2006instrumental,angrist1995two,imbens2014instrumental,brito2002generalized,pearl2009causality,van2015efficiently,sokolovska2021role,xie2022testability} provide a solution. 

Some methods have been developed for standard IV based causal effect estimation when the IVs are given by domain experts~\cite{angrist1996identification,imbens2014instrumental}, such as the well-known two-stage least squares IV estimator~\cite{angrist1995two} (TSLS) which obtains causal effect using the ratio of two regression coefficients. Recently, Athey \etal~\cite{athey2019generalized} developed the generalised random forests to estimate conditional causal effects by using non-parametric quantile regression and instrumental variable regression (FIVR). The conditional causal effects can be aggregated into the average causal effect and FIVR has been compared in our experiments. We refer readers to~\cite{martens2006instrumental,hernan2006instruments,baiocchi2014instrumental} for a review of standard IV based methods. 

When a CIV is given, a proper conditioning set needs to be identified for unbiased causal effect estimation. Cheng \etal~\cite{cheng2022ancestral} have proposed a data-driven method AIViP for identifying such a conditioning set for casual effect estimation with a given CIV. Our work is different from these works since we focus on discovering AIVs and the corresponding conditioning set simultaneously from data and the proposed method is more general than the existing methods. 

CIV approach is very similar to the approach of covariate adjustment since both of them need to identify a proper conditioning set. However, there are essential differences between CIV and covariate adjustment. Most methods for covariate adjustment assume not a latent variable between $W$ on $Y$~\cite{pearl2009causality,maathuis2015generalized,perkovic2018complete}. There are four graphical criteria for identifying a proper conditioning set from a causal graph: back-door criterion~\cite{pearl2009causality}, adjustment criterion~\cite{shpitser2012validity}, generalised back-door criterion~\cite{maathuis2015generalized} and generalised adjustment criterion~\cite{perkovic2018complete}. There are some data-driven methods based on the four graphical criteria~\cite{maathuis2009estimating,entner2013data,cheng2022toward}. More detailed discussions for covariate selection can be found in~\cite{sauer2013review,witte2019covariate,guo2020survey}. There are no works for finding a conditioning set when the pair of $(W, Y)$ are confounded as discussed in this work.

\section{Conclusion}
\label{sec:con}
Estimating causal effects in the presence of latent variables is a challenging problem. IV is a well-known approach to address this challenge. However, most existing IV methods require strong domain knowledge or assumptions to determine an IV. This restricts the practical use of the IV approach. In this paper, we present the theory and a practical algorithm (AIV.GT) for finding valid AIVs and their corresponding conditioning sets from data, to enable data-driven causal effect estimation from data with latent variables. The experiments on synthetic datasets demonstrate that AIV.GT is able to address the challenges of latent variables and outperform the state-of-the-art causal effect estimators. The experimental results on two real-world datasets also show that AIV.GT achieves consistent results with empirical estimates in the literature, implying the practicability of AIV.GT in real-world applications.

%\section*{Acknowledgement}
%We thank the action editor and the reviewers for their valuable comments. We wish to acknowledge the support from the Australian Research Council (under grant DP200101210) and the National Natural Science Foundation of China (under Grant 61876206). 

% Can use something like this to put references on a page
% by themselves when using endfloat and the captions off option.
\ifCLASSOPTIONcaptionsoff
\newpage
\fi

\bibliographystyle{IEEEtran}
% argument is your BibTeX string definitions and bibliography database(s)
%\bibliography{IEEEabrv,../bib/paper}
\bibliography{IEEEabrv}
% <OR> manually copy in the resultant .bbl file
% set second argument of \begin to the number of references
% (used to reserve space for the reference number labels box)
% biography section

\end{document}